%% file: main.tex
\documentclass{article}

\usepackage{iclr2027_conference,times}
\usepackage{amsthm}
\usepackage{amsmath,amssymb,mathtools}
\usepackage{amsthm}
\usepackage{booktabs,multirow,array,tabularx}
\usepackage{graphicx}
\usepackage{xcolor}
\usepackage{microtype}
\usepackage[htt]{hyphenat} %
\usepackage{algorithm}
\usepackage[noend]{algpseudocode}
\usepackage{enumitem}
\usepackage{url}
\usepackage{xurl}
\usepackage{hyperref}
\usepackage[nameinlink,capitalize,noabbrev]{cleveref}
\usepackage{caption}
\usepackage{subcaption}
\usepackage{amsmath,amssymb,amsthm}

\theoremstyle{plain}
\newtheorem{wmpatheorem}{Theorem}[section]

\theoremstyle{definition}
\newtheorem{wmpaassumption}[wmpatheorem]{Assumption}

\theoremstyle{plain}
\hypersetup{colorlinks=true,citecolor=blue,linkcolor=blue,urlcolor=blue}
\setlist[itemize]{leftmargin=*,topsep=2pt,itemsep=1pt}
\setlist[enumerate]{leftmargin=*,topsep=2pt,itemsep=1pt}

\newcommand{\wmpa}{\textsc{WMPA}}
\providecommand{\wmpp}{\wmpa}
\newcommand{\bestfixed}{\textsc{Best-Policy}}
\newcommand{\randomswitch}{\textsc{Random-Switch}}

\newcommand{\E}{\mathbb{E}}
\newcommand{\PiBank}{\boldsymbol{\Pi}}
\newcommand{\calD}{\mathcal{D}}
\newcommand{\calS}{\mathcal{S}}
\newcommand{\calA}{\mathcal{A}}
\newcommand{\calG}{\mathcal{G}}

\input{tables/numbers}

\IfFileExists{tables/numbers_ablations.tex}{\input{tables/numbers_ablations}}{}
\IfFileExists{tables/numbers_mechanism.tex}{

\input{tables/numbers_mechanism}}{}
\IfFileExists{tables/numbers_diagnostics.tex}{

\input{tables/numbers_diagnostics}}{}
\IfFileExists{tables/numbers_wmseeds.tex}{

\input{tables/numbers_wmseeds}}{}
\IfFileExists{tables/numbers_kcgrid.tex}{

\input{tables/numbers_kcgrid}}{}
\IfFileExists{tables/numbers_vhsel.tex}{

\input{tables/numbers_vhsel}}{}
\IfFileExists{tables/numbers_native.tex}{

\input{tables/numbers_native}}{}

\title{World-Model Policy Arbiter for\\
Goal-Conditioned Reinforcement Learning}

\iclrfinalcopy

\author{Junwei Quan\thanks{Corresponding author: \texttt{junwei.quan@mail.utoronto.ca}} \\
University of Toronto \\
\And
Evgenii Opryshko \\
University of Toronto, Vector Institute \\
\And
Nicholas Rhinehart \\
University of Toronto \\
\And
Igor Gilitschenski \\
University of Toronto, Vector Institute \\
}
\date{}

\begin{document}
\maketitle
\lhead{Preprint. Under review.}
\begin{abstract}
Offline goal-conditioned reinforcement learning (GCRL) has produced a diverse set of goal-reaching algorithms, yet no single algorithm performs best across environments, goals, and even different phases of the same task.
Rather than deploying only the best-performing policy, we ask whether a set of frozen goal-conditioned policies can be used collectively as a \emph{portfolio}, deciding at every state which policy should act.
Choosing a policy at each state is not straightforward. The policies' own value functions cannot be compared directly: they may use different scales, and some policies have no value function. We need to judge each policy by the states it is likely to reach, even though we can execute only one policy at a time. We also need to avoid switching so often that control becomes unstable.
To address these challenges, we introduce World-Model Policy Arbiter (\wmpa), a test-time framework that, given a set of frozen policies as input, rolls out each frozen policy in a learned state-space world model, evaluates the imagined futures with a shared goal-conditioned value function, and executes the highest-scoring policy for a short commitment interval before the next round of arbitration (policy selection).
\wmpa{} assumes access to a bank of frozen goal-conditioned policies and requires neither policy retraining nor privileged task-specific knowledge. Under the official OGBench evaluation protocol on \NumEnvs{} state-based datasets spanning maze navigation as well as cube, scene, and puzzle manipulation, \wmpa{} improves the macro-average success rate from the \MeanBest\% achieved by the best policy selected per dataset to \MeanWMPP\%, with statistically significant gains on \NumSigUp{} datasets. These gains include \GainCubeDoublePlay{} percentage points on \texttt{cube-double-play} and \GainScenePlay{} percentage points on \texttt{scene-play}. Project website: \url{https://junwei0102.github.io/wmpa-webpage/}
\end{abstract}

\section{Introduction}
\label{sec:intro}

Goal-conditioned reinforcement learning (GCRL) seeks policies that reach user-specified goals, many of which require a sequence of qualitatively different phases: to stack two cubes at a target location, a robot arm must bring its free gripper to the first cube, grasp and place it, then repeat for the second cube.
GCRL methods learn such policies through behavior cloning, temporal-difference learning, contrastive learning, quasimetric learning, or hierarchical control \citep{eysenbach2022contrastive,wang2023quasimetric,park2023hiql}, and because their assumptions differ, no single method dominates across environments, goals, and task horizons \citep{park2025ogbench}.
Nor does one method dominate across the phases of a single task: on OGBench's two-cube tasks (\texttt{cube-double-play}), GCIQL, the strongest of six standard GCRL methods, succeeds in 36\% of episodes and HIQL in only 5\%, yet our method, selecting among the six policies within each episode, succeeds in 69\% (\cref{tab:main}).
In the stacking episode of \cref{fig:episode}, for instance, control alternates between HIQL, which moves the free gripper to the next cube, and GCIQL, which grasps and places it.
Independently trained policies can thus act as pieces of a larger solution, and we study how to string them together at test time without retraining any of them.

Most prior work aims to train a single stronger policy, and the standard workflow for practitioners is to train several candidates, keep the one with the highest average success, and discard the rest.
In the two-cube example, this keeps GCIQL and discards HIQL, the policy that moves the gripper between cubes in \cref{fig:episode}.
Keeping all of them instead raises a question: \emph{given a set of frozen goal-conditioned policies, which policy should act from the current state for the current goal?}

\begin{figure}[t]
    \centering
    \includegraphics[width=\linewidth]{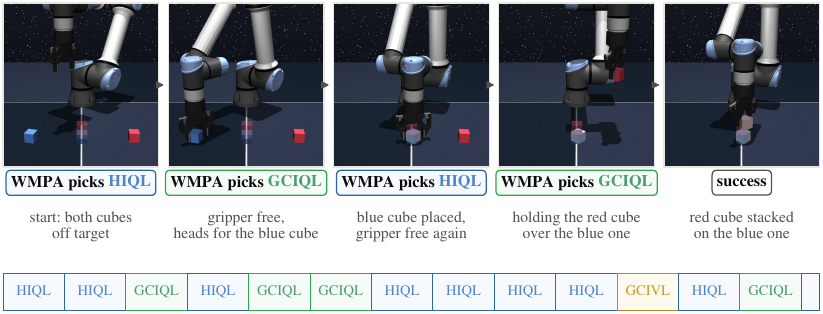}
    \caption{\textbf{\wmpa{} composes policies within one episode.} A \texttt{cube-double-play} evaluation episode. Top: the real state at four arbitration steps and at success, with the policy the world model selects there; the label under each frame describes the phase of the task. Bottom: the policy executed over the whole episode, one block per arbitration. The arbiter alternates between HIQL, which moves the free gripper to the next cube, and GCIQL, which grasps and places it.}
    \label{fig:episode}
\end{figure}

This selection problem is challenging for three reasons.
First, the policies cannot be ranked by their own value functions.
The most direct approach would pair each policy $\pi_i$ with its value function $V_i$ and execute $\pi_{i^\star}$ with $i^\star = \arg\max_i V_i(s, g)$, a policy-level form of generalized policy improvement \citep{barreto2017successor}.
This rule presumes that every $V_i$ measures the same quantity on the same scale, which does not hold across GCRL algorithms: $V_i$ may be a temporal-difference value, a quasimetric distance, or a contrastive score, and GCBC learns none.
Second, a policy should be judged by where its closed-loop behavior leads, yet at test time only the selected one is executed, so the consequences of the alternatives are never observed.
Third, frequent switching between policies can disrupt control.

Our key idea is to compare policies through the consequences they are predicted to produce. Our method, World-Model Policy Arbiter (\wmpa), rolls out each frozen policy in a learned world model while conditioning all policies on the same final goal. It scores the resulting imagined futures with a shared goal-conditioned value function, executes the highest-scoring policy for a fixed commitment interval, and repeats this procedure. Unlike action-level model-predictive control, \wmpa{} does not optimize new action sequences. Instead, it evaluates the behaviors already represented in the policy bank, which requires executing each policy just once at each imagined step, rather than a search over possible trajectories. All base policies remain frozen, and the selector requires no hand-designed progress signal, no rollouts in the real environment, and no privileged task knowledge.

Our paper makes the following contributions:
\begin{itemize}
    \item \textbf{Problem: Policy arbitration for offline GCRL.} We study test-time control over a bank of independently trained goal-conditioned policies as a policy-selection problem, which is distinct from selecting a single global winner or retargeting the subgoal of a single controller. We argue that offline GCRL is the natural setting for this problem: the policy bank incurs no additional data collection cost, because its members are trained entirely from the offline dataset without online interaction; moreover, goal-conditioning multiplies the situations in which different members excel.
    \item \textbf{Method: World Model Policy Arbitration.} \wmpa{} compares frozen policies by scoring their closed-loop imagined futures with a shared goal-conditioned value function, without policy retraining or privileged signals. 
    We also derive a conditional guarantee over the best fixed bank member: arbitration improves success when locally useful alternatives are available,
    true-dynamics scores separate inferior choices, and rollout-score errors rarely overturn that separation (\cref{app:wmpa_theory}).
    \item \textbf{Finding: Empirical gains from choosing frozen policies.} We evaluate \wmpa{} on \NumEnvs{} state-based OGBench datasets under the official protocol. We compare against every policy in the bank, a random-switching baseline with a matched commitment interval, and additional model-free and model-based baselines. We further analyze when arbitration is beneficial, using a per-episode hindsight oracle and policy-bank statistics.
\end{itemize}

\section{Background On Goal-Conditioned Reinforcement Learning}
\label{sec:prelim}
Following \citet{park2025ogbench}, we consider goal-conditioned reinforcement learning over a controlled Markov process
$\mathcal{M}=(\calS,\calA,\calG,\mu,p)$.
Here, $\calS$ is the state space, $\calA$ is the action space, $\calG$ is the goal space, $\mu\in\Delta(\calS)$ is the initial-state distribution, and $p(s'\mid s,a)\colon\calS\times\calA\to\Delta(\calS)$ denotes the transition dynamics, where $\Delta(\cdot)$ denotes the space of probability distributions over its argument. A goal-conditioned policy $\pi(a\mid s,g)\colon\calS\times\calG\to\Delta(\calA)$ maps the current state and goal to a distribution over actions.

Let $\nu\in\Delta(\calS\times\calG)$ denote the evaluation distribution over initial-state and goal pairs. Given $(s_0,g)\sim\nu$, a policy induces a trajectory $\tau=(s_0,a_0,s_1,\ldots,a_{T-1},s_T)$ according to the transition dynamics. Its performance is measured by expected goal-reaching success:
\[
    J(\pi)
    =
    \mathbb{E}_{(s_0,g)\sim\nu,\,
    \tau\sim p^\pi(\tau\mid s_0,g)}
    \left[\operatorname{Succ}(\tau,g)\right],
\]
where $\operatorname{Succ}(\tau,g)\in\{0,1\}$ indicates whether $\tau$
reaches $g$. This success criterion is used only for evaluation and is not available during training. Instead, goal-reaching supervision is constructed through hindsight goal relabeling, using states from logged trajectories as goals.

We assume access to a logged trajectory dataset $\calD=\{\tau^{(n)}\}_{n=1}^N$, where each trajectory $\tau^{(n)}=(s_0^{(n)},a_0^{(n)},s_1^{(n)},\ldots,s_{T_n}^{(n)})$ is collected as part of an offline dataset. We use $\calD$ to train the auxiliary world model and value function. In our experiments, $\calD$ is the OGBench training split, and all policies are trained offline using the same split. The general formulation, however, does not require the policies to be trained offline or from a shared dataset.

Rather than learning a single goal-conditioned policy, we are given a portfolio of $M$ frozen policies,
\[
    \PiBank=\{\pi_i(a\mid s,g)\}_{i=1}^M,
\]
whose members may differ in the GCRL algorithm or random seed used for training. All members share a common state, action, and goal interface, and the arbiter requires only black-box access to their action distributions. Internal value estimates and representations are neither assumed to be accessible nor required to be comparable across policies.

At arbitration time $t$, the arbiter selects a policy index $z_t\in\{1,\ldots,M\}$. The selected policy then controls the agent for a commitment interval of length $c$:
\[
    a_{t+j}\sim\pi_{z_t}(\cdot\mid s_{t+j},g),
    \qquad j=0,\ldots,c-1.
\]
The resulting controller is therefore a switching policy over the frozen portfolio members. The policy-portfolio objective is to maximize the expected goal-reaching success of this induced policy without modifying any member of $\PiBank$. The central problem is thus to determine, from the current state and goal, which portfolio member is expected to produce the most favorable future trajectory. 

\begin{figure}[t]
    \centering
    \includegraphics[width=0.95\linewidth]{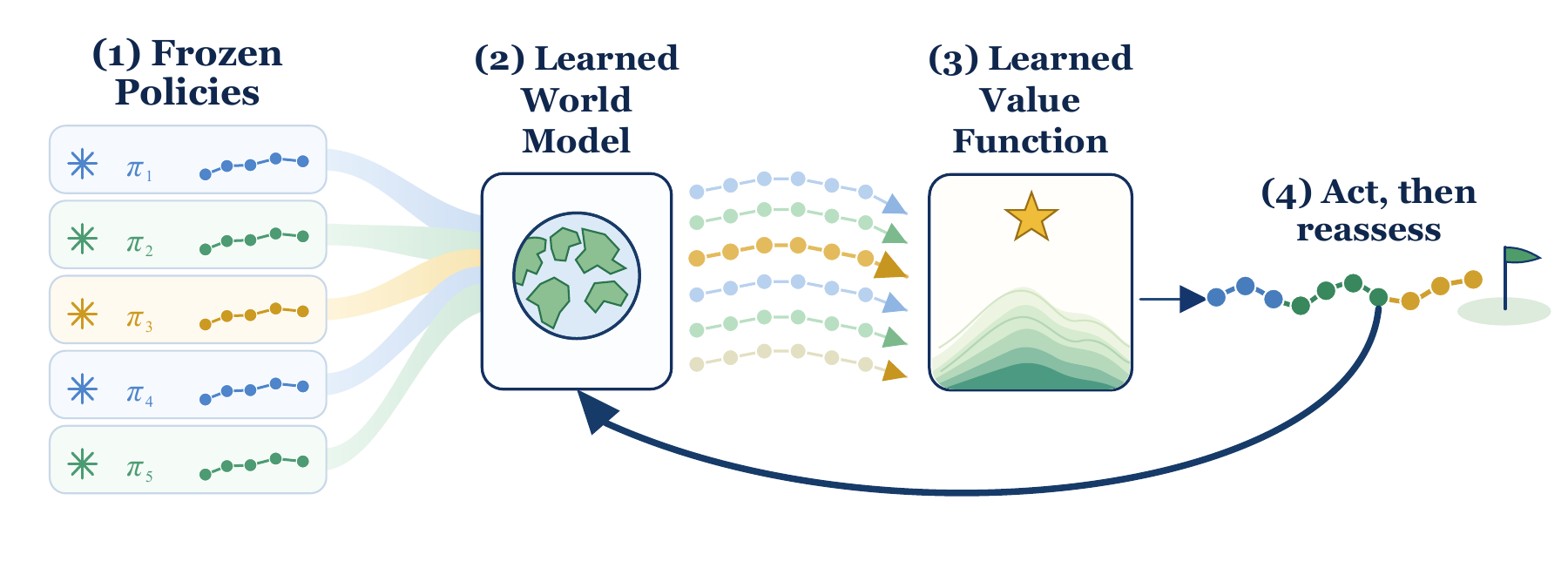}
    \vspace{-4mm}
    \caption{
    \textbf{WMPA overview.}
    (1) We begin with a frozen portfolio of heterogeneous GCRL policies.
    (2) \wmpa{} imagines the future induced by each policy through a shared
    learned world model.
    (3) A shared goal-conditioned value function scores the predicted futures
    on a common scale.
    (4) The highest-ranked policy is executed for $c$ steps, after which the process repeats.
    }
    \label{fig:overview}
\end{figure}

\section{Method: World-Model Policy Arbiter}
\label{sec:method}

At each arbitration point, \wmpa{} imagines $k$ steps of every frozen policy in a learned world model (\cref{sec:dynamics}), scores the imagined states with a single goal-conditioned value function (\cref{sec:value}), and executes the winner for $c=k$ environment steps (\cref{sec:counterfactual}); \cref{fig:overview} illustrates the loop.
Each step answers one challenge from \cref{sec:intro}: imagination supplies the unobserved consequences of the policies that are not selected, one shared value function puts all candidates on a common scale, and both commitment and value-based selection limit excessive switching between policies.

\subsection{Learned state-space world model}
\label{sec:dynamics}
For each environment, we train an ensemble of $E$ independently parameterized MLP dynamics models~\citep{kurutach2018model} directly in the environment state space. Throughout this section, all states are raw environment states; the fixed normalizer $\nu(s)=(s-\mu_s)/\sigma_s$, whose statistics are computed from the logged trajectories, is applied only at the network input. Each member predicts a normalized one-step delta and returns the next state,
\begin{align}
\hat{\delta}^{(e)}_t &= f_{\theta_e}\big(\nu(s_t),a_t\big),\\
\hat{s}^{(e)}_{t+1} &= \hat f_{\theta_e}(s_t,a_t) := s_t + \mu_{\Delta}+\sigma_{\Delta}\hat{\delta}^{(e)}_t,
\end{align}
where $\mu_{\Delta}$ and $\sigma_{\Delta}$ are fixed statistics computed from valid logged transitions. Multi-step predictions are obtained by applying the same $\hat f_{\theta_e}$ to its own outputs, with re-normalization at every input. The outputs are trained to match non-normalized environment states and are consumed by the frozen policies exactly like real observations. The members are trained on $H$-step windows by unrolling their own predictions and penalizing the normalized error at every step,
\begin{equation}
\mathcal{L}_{\mathrm{dyn}} = \frac{1}{E}\sum_{e=1}^{E}
\E_{\tau\sim\calD}
\left[\sum_{h=1}^{H} w_h\left\|\nu\big(\hat{s}^{(e)}_{t+h}\big)-\nu\big(s_{t+h}\big)\right\|_2^2\right],
\end{equation}
with a curriculum that increases the effective horizon from $1$ to $H$ over the first $10^5$ updates ($w_h$ uniform over the active steps). Predicting in state space isolates arbitration from representation learning and guarantees that imagined states are valid inputs to every frozen policy. Notably, the model is decoupled from the policies: learning it can be done independently of making changes to the policy bank.

\subsection{Goal-conditioned value function}
\label{sec:value}

The arbiter scores every candidate's imagined states with a single goal-conditioned value function $V(s,g)$, so policies trained with different objectives are compared on a common scale. Only the ordering matters: futures reached by different candidates from the same decision state should be ranked in the same order as their realized outcomes. Nothing in the arbiter depends on how $V$ is parameterized, and we use two parameterizations whose inductive biases match different kinds of task progress; we choose between them per task family (\cref{sec:setup}, \cref{app:hyperparams}). We also swap the two value heads to show the performance difference and discuss value-head choice in \cref{app:ablations}.

\paragraph{Metric value (maze, cube, scene).} Where progress is geometric, like moving through a maze, displacing objects toward target poses, we use a latent distance with separate state and goal encoders for each ensemble member,
\begin{equation}
V_{\phi_e}(s,g)
= -\left\|\varphi^{(e)}_{S}(s)-\varphi^{(e)}_{G}(\psi^{(e)}(g))\right\|_2,
\label{eq:metric-value}
\end{equation}
where $\psi^{(e)}$ is a low-dimensional goal representation, following \citet{kang2026lavl}. The distance orders imagined state--goal pairs even where closed-loop rollouts leave the support of the logged data, and the distinct encoders keep reachability directional ($V_{\phi_e}(s,g)\neq V_{\phi_e}(g,s)$ in general). Each value member is trained jointly with its corresponding dynamics member by an expectile temporal-difference objective on relabeled goals with a sparse reward ($0$ at attainment and $-1$ otherwise),
\begin{equation}
\mathcal{L}_{\mathrm{value}}
= \frac{1}{E}\sum_{e=1}^{E}
\E\!\left[
\left|\kappa-\mathbf{1}\{\bar\delta_e<0\}\right|
\big(y_e-V_{\phi_e}(s,g)\big)^2
\right],
\quad
y_e = r(s,g)+\gamma\,m(s,g)\,
\bar V_{\bar\phi_e}(s',g),
\label{eq:expectile}
\end{equation}
where $\bar V$ is a target network, $m(s,g)=0$ at attainment, $\bar\delta_e=y_e-\bar V_{\bar\phi_e}(s,g)$, and the expectile satisfies $\kappa>0.5$ (settings in \cref{app:wm}).

\paragraph{Direct value (puzzle).} On the puzzle datasets progress is combinatorial: a state is a pattern of buttons, a press toggles a button together with its neighbours, and the number of presses to the goal pattern is not monotone in embedding distance. Thus, we use a value with no geometric structure,
\begin{equation}
V_{\theta}(s,g) = \mathrm{MLP}_{\theta}\big([\,s;\,g\,]\big),
\label{eq:direct-value}
\end{equation}
trained by implicit Q-learning \citep{kostrikov2022iql} on the same dataset. Two action-value heads $Q_{\omega,i}(s,a,g)$ are regressed toward the one-step target $r(s,g)+\gamma\,m(s,g)\,V_{\theta}(s',g)$, and the state value is the $\kappa$-expectile of the target value function $\bar Q=\min_i \bar Q_{\omega,i}$ under the dataset's actions,
\begin{equation}
\mathcal{L}_{V}
= \E\Big[\left|\kappa-\mathbf{1}\{\bar Q(s,a,g)-V_{\theta}(s,g)<0\}\right|\big(\bar Q(s,a,g)-V_{\theta}(s,g)\big)^2\Big],
\label{eq:direct-expectile}
\end{equation}
so that $V_{\theta}$ approximates the value of the best dataset-supported action at each relabeled state--goal pair. It is the state-value network of the bank's GCIQL member, trained with the reference implementation's settings and reused unchanged as the scorer. Being one function applied to every candidate's imagined states, it keeps scores comparable across the bank exactly as the metric value does.

\subsection{Model-Based Policy Rollout and ranking}
\label{sec:counterfactual}
At each arbitration point, every frozen policy is rolled out in closed loop for $k$ model steps, acting deterministically on its own imagined states. All policies in the bank are Gaussian actors whose deterministic mode is the behavior deployed and evaluated by the benchmark, so the imagined branch reproduces the behavior that would be executed; we do not evaluate stochastic rollouts obtained by sampling actions from the policy distributions. For policy $i$ and ensemble member $e$,
\begin{align}
\hat{s}^{i,e}_{0} &= s_t,\\
\hat{a}^{i,e}_{h} &= \pi_i(\hat{s}^{i,e}_{h},g),\\
\hat{s}^{i,e}_{h+1} &= \hat{f}_{\theta_e}(\hat{s}^{i,e}_{h},\hat{a}^{i,e}_{h}),
\qquad h=0,\ldots,k-1,
\end{align}
where $\hat f_{\theta_e}$ is the $e$-th dynamics-model ensemble member of \cref{sec:dynamics}.
We score each candidate by the highest value imagined state along the rollout,
\begin{equation}
S_i(s_t,g)=\max_{1\le h\le k}
\frac{1}{E}\sum_{e=1}^{E}
V_{\phi_e}(\hat{s}^{i,e}_{h},g).
\label{eq:score}
\end{equation}
The max operator allows a policy to receive credit for reaching the goal before the end of the planning horizon. \cref{tab:ablations} ablates both reductions (mean and final-state alternatives over the horizon; min and mean$-$std over ensemble members). \wmpa{} selects the policy index with the highest score $i^\star=\arg\max_{i} S_i(s_t,g)$ and executes $\pi_{i^*}$ in the real environment for $c$ steps, and then repeats the arbitration (\cref{alg:wmpp}, \cref{app:hyperparams}).

\paragraph{Rollout length and commitment.}
Each arbitration costs $MEk$ dynamics and $MEk$ value evaluations ($M{=}6$ policies, $E{=}3$ ensemble members), or $MEk/c$ per environment step.
Setting $k{=}c$ fixes this at $ME$, which adds about $3$--$4$\,ms per step on 8 CPU threads (\cref{app:runtime}); the horizon is thus a single hyperparameter, set per task family (\cref{tab:settings}), trading off reactivity against foresight at constant amortized cost.
The large gains are robust to $k$ and $c$, even when $k \neq c$ (\cref{tab:kc-grid}).

\section{Experiments}
\label{sec:setup}
We evaluate \wmpa{} on \NumEnvs{} state-based OGBench datasets \citep{park2025ogbench}: two maze-navigation datasets (\texttt{pointmaze-medium} and \texttt{antmaze-large}) as well as six cube (\texttt{single} to \texttt{triple}), two scene, and eight puzzle (\texttt{3x3} to \texttt{4x6}) manipulation datasets, which together cover the \texttt{play} and \texttt{noisy} data-collection variants (\cref{app:hyperparams}). OGBench exposes the same observation, action, goal, and evaluation interfaces to policies trained with different objectives, and the use of state observations isolates arbitration from visual representation learning. The portfolio contains the six OGBench reference learners (GCBC, GCIVL, GCIQL, QRL, CRL, and HIQL), each trained with the unmodified reference implementation and per-family hyperparameters for $10^6$ steps. The policy bank holds one checkpoint per algorithm, and the study is replicated over three bank seeds. All policies remain frozen; the scores of the individual members are reported in \cref{tab:main}. All methods are evaluated on the same fixed episode seeds, so every comparison is paired per episode; reported intervals are 95\% hierarchical bootstraps over bank seeds and then episodes, and a gain is called significant when the interval of the paired difference excludes zero (\cref{app:bank-stats}).

\subsection{Policy Arbiter improves goal reaching}
\label{sec:main_res}
\Cref{tab:main} reports the success rate of every policy in the bank and of \wmpa{}.
Across the \NumEnvs{} datasets, \wmpa{} achieves a mean success rate of \MeanWMPP\%, compared with \MeanBest\% for the best policy and \MeanRandom\% for random switching with the same interval and policy bank (\cref{sec:ablation}).
\wmpa{} attains the highest mean on \NumBetterMean{} datasets, and its gain over the best policy is significantly positive on \NumSigUp{} datasets, not significant on \NumNS{}, and significantly negative on \NumSigDown{}.
The largest improvements occur on multi-object manipulation tasks, including \GainSceneNoisy{} percentage points on \texttt{scene-noisy} and \GainCubeDoubleNoisy{} on \texttt{cube-double-noisy}; the per-dataset confidence intervals are reported in \cref{app:bank-stats}.

\begin{table}[t]
\caption{\textbf{Success rate (\%) on OGBench under the official protocol} ($5$ goals $\times$ $50$ episodes, $3$ bank seeds). We report every bank policy, the best policy for each dataset which is the bank policy with the highest three-seed mean success on the reported episodes, and \wmpa{}, which arbitrates over the same frozen policies with the $(k,c)$ setting of its task family (\cref{app:hyperparams}). Each entry is the mean success rate $\pm$ the half-width of its $95\%$ hierarchical-bootstrap interval (resampling bank seeds, then episodes), rounded to the nearest integer. Bold indicates values within $95\%$ of the row maximum.
}
\label{tab:main}
\centering
\scriptsize
\setlength{\tabcolsep}{3pt}
\resizebox{\linewidth}{!}{\input{tables/main_results}}
\end{table}

We compare \wmpa{} to action-level model-predictive control (MPC) applied to the best policy with the same $k$ and value function, re-planned at every environment step, in \Cref{tab:search}.
MPC, given $k$ times the model budget of \wmpa{} ($18k$ versus $18$ imagined steps per environment step), averages \AblMeanPolicyMpc\% against \MeanWMPP\% for \wmpa{}; it is significantly worse than arbitration on \AblNumDownPolicyMpc{} of \AblNumEnvsPolicyMpc{} datasets and significantly better on \AblNumUpPolicyMpc{}, with its largest deficits on the datasets where arbitration gains most (\texttt{cube-single-play}, \texttt{cube-double-play}, \texttt{scene-play}), which supports the benefit of arbitrating among different policies beyond locally refining the actions of a single best policy.

\subsection{Does the world model improve performance? Usually.}
\label{sec:ablation}

We study a \randomswitch{} ablation which removes the entire learned model: it uses neither the world-model rollouts nor the value head, and draws a policy uniformly from the bank at every arbitration point. Everything else is shared with \wmpa{}, so the comparison isolates the learned scoring of imagined futures. Under per-episode pairing, \wmpa{} outperforms \randomswitch{} by \MeanWMPPminusRandom{} points on average, with significant gains on \NumWMPPvsRandomSig{} datasets. Random switching alone is significantly below the best policy on \NumRandomSigDown{} datasets and significantly above it on \NumRandomSigUp{}. \wmpa{}'s major gains therefore come neither from the commitment schedule nor from cycling through policies, but from model-based candidate ranking. Full results of this ablation are presented in \cref{tab:random-c}.

To check whether the main gains of policy switching come from avoiding states that the current policy cannot escape, we design an event-triggered model-free baseline. It retains the selected policy and switches to a randomly chosen other policy only when the agent stalls. Stalling is measured as the net displacement of the full observation vector over the last $m$ steps, $\|\nu(s_t)-\nu(s_{t-m})\|_2$, in the standardized space of \cref{sec:dynamics}, so the threshold $\varepsilon=0.05$ is dimensionless and is fixed across all datasets without tuning. \Cref{app:random-c} evaluates this rule for $m\in\{5,10,25,50\}$: its success varies little with $m$, and it does not recover the manipulation gains, so a displacement trigger is a poor proxy for the situations in which switching is beneficial (\cref{tab:stall}).

We also isolate the effect of dynamics-model accuracy on the performance of the method. To this end, \cref{tab:sim-all} replaces the learned dynamics with true-simulator branches on every dataset. This substitution raises the success rate by \AblMeanDeltaSimSel{} points on average, with significant gains on \AblNumUpSimSel{} of \AblNumEnvsSimSel{} datasets, and never significantly lowers it, so model error costs some ranking quality but is not the main factor that limits the method.

A Q-select ablation tests whether world-model rollouts are necessary at all: on the same bank and episodes (\cref{tab:scorer-selector}), Q-select ranks the action proposed by each member at the current state with the twin-Q value function of the GCIQL member, without imagined states. Re-selecting at every step ($c{=}1$), Q-select is a strong selector: it significantly exceeds the best policy on \MechNumAboveBestQselOne{} of \MechNumEnvsQselOne{} datasets, and on the puzzle family, where \wmpa{} scores with the $V$ of the same member, its gap to \wmpa{} is \MechFamFullQselOnePuzzle{} points on average. This is expected, since $Q(s,a,g)=r+\gamma V(s',g)$ already summarizes the future. Rollouts nevertheless matter for two reasons. First, Q-select requires an action-value function, whereas the metric value of \cref{sec:value} can be used only through imagined states and is the better scorer on the maze, cube, and scene families; averaged over all datasets, \wmpa{} exceeds Q-select at $c{=}1$ by \MechMeanDeltaFullQselOne{} points. Second, scoring only the first action says nothing about the $c{-}1$ steps that follow, so at the matched interval $c{=}k$, Q-select degrades while the $k$-step rollout does not: direct-value \wmpa{} exceeds Q-select by \MechMeanDeltaDirectQselK{} points on average, with significant gains on \MechNumUpDirectQselK{} datasets and no significant losses. Rollouts thus enable the better value head and make arbitration safe to amortize over $c$ steps, which keeps the budget at $ME$ model calls per step.

Scoring each rollout with the policy's own value function instead fails even in the setting most favorable to it: restricted to GCIQL, GCIVL, and HIQL, whose value functions estimate the same quantity, the selection collapses onto the most optimistic value function and is significantly worse than shared scoring on 14 of 18 datasets (\cref{tab:native-critic}).

\subsection{Arbitration benefits from complementary behaviors within episodes} 
\label{sec:when}

\wmpa{} can only select behaviors that already exist in the bank.
Here, we investigate whether the gains of arbitration arise solely from selecting the best policy for each episode or whether they also rely on within-episode composition.
We construct the per-episode oracle by evaluating every policy in the bank on each episode under a fixed seed and defining the oracle performance on that episode as the maximum performance across all policies.

\begin{figure}[htbp]
    \centering
    \includegraphics[width=1\linewidth]{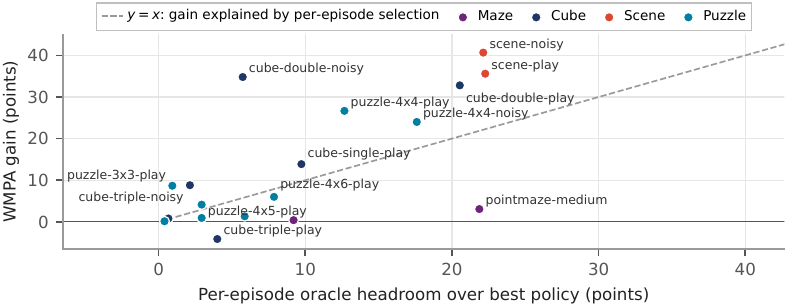}
    \caption{\textbf{What makes arbitration work?} Each point is a dataset. Horizontal axis: success added over the best policy by the per-episode oracle, which picks a successful bank policy per episode but never switches within it, so it bounds per-episode selection, not \wmpa{}. Vertical axis: success added by \wmpa{}. Above the dashed $y=x$ line, the extra success must come from switching within an episode; \wmpa{} is significantly above the oracle on \NumAboveUnionSig{} datasets and below it on \NumBelowUnionSig{} (\cref{app:bank-stats}).}

    \label{fig:when}
\end{figure}

In \cref{fig:when}, we plot the gain of \wmpa{} against the gain of the per-episode oracle, where both gains are measured relative to the best policy in the bank.
First, \wmpa{} significantly outperforms the per-episode oracle on \NumAboveUnionSig{} datasets, with large margins on \texttt{cube-double-play} (\WmppCubeDoublePlay\% versus \UnionCubeDoublePlay\%) and \texttt{scene-play} (\WmppScenePlay\% versus \UnionScenePlay\%).
Because the oracle already selects a successful policy for each evaluation episode whenever one exists, per-episode selection alone cannot account for these gains; instead, they indicate a benefit of composing policies \emph{within} an episode (\cref{fig:episode} shows one such episode).
Second, neither a large gap between the best and the second-best policy nor limited headroom between the best policy and the oracle precludes gains from switching.
On \texttt{puzzle-3x3-play}, \wmpa{} improves over the best policy by \GainPuzzleThreebyThreePlay{} percentage points, even though the success rate of the second-best policy is \DgapPuzzleThreebyThreePlay{} percentage points lower than that of the best policy and the headroom to the per-episode oracle is only \HeadroomPuzzleThreebyThreePlay{}\%.
Similarly, on \texttt{cube-single-play}, the second-best policy trails the best by \DgapCubeSinglePlay{} percentage points and the oracle provides only \HeadroomCubeSinglePlay{} points of headroom, yet the gain is \GainCubeSinglePlay{} percentage points.
Both gains exceed the corresponding static headroom, which shows that within-episode composition can improve performance even when a single member is substantially stronger than the others overall.
Moreover, the bank need not be drawn from a single dataset: pooling the policies trained on the \texttt{play} and \texttt{noisy} data of an environment into one twelve-member bank never degrades the performance of \wmpa{} and improves it whenever the added members contribute new competencies (\cref{app:bank-composition}). A complementary analysis in \cref{app:wmpa_theory} expresses the gain over the best fixed bank member as accumulated one-chunk advantages and bounds the loss from selecting inferior members using score separation and rollout-score error.

\section{Related Work}
\label{sec:related}
\paragraph{Goal-conditioned reinforcement learning.}
GCRL learns policies conditioned on a desired goal, using supervised, value-based, contrastive, quasimetric, and hierarchical objectives \citep{eysenbach2022contrastive,wang2023quasimetric,park2023hiql}; OGBench provides common datasets and reference implementations, and it shows that the strengths of these methods vary across navigation, stitching, stochasticity, and manipulation \citep{park2025ogbench}. Model-based GCRL uses a model to improve the policy being trained: planning with a learned model to fine-tune offline policies \citep[GOPlan;][]{wang2024goplan}, planning exploration goals \citep[PEG;][]{hu2023peg}, modeling transitions between arbitrary buffer states for goal navigation \citep{duan2024mun}, synthesizing goal-reaching data backward \citep{hoftmann2023backward}, or constructing self-curricula \citep{gong2024gcpo}; see \citet{liu2022gcrl} for a survey. Stronger recent learners \citep{dong2026hiflowq} and abstraction-based state--goal representations \citep{wibault2026abstraction} are 
complementary to \wmpa{}: they could serve as additional bank members and can supply the value head. Goal-conditioned hypernetworks even generate policies on demand \citep{faccio2023goalcond}. All of these approaches modify or create the policy; we leave every policy frozen and use the model only to choose among them.

\paragraph{Test-time planning with frozen policies and world models.}
Existing test-time methods devote computation to \emph{producing} behavior. Goal-conditioned policies serve as low-level controllers for planning through intermediate goals \citep{nasiriany2019planning}; Test-Time Graph Search builds a graph over dataset states and feeds subgoals to a single frozen policy \citep{opryshko2026ttgs}; CompPlan composes subgoal-conditioned instances of a single policy family through policy-conditioned jumpy models \citep{farebrother2026compplan}; and ALPS plans over Laplacian subgoal clusters with a one-step model \citep{shehmar2026laplacian}. All these methods ask \emph{which subgoal should one controller pursue?}, whereas \wmpa{} keeps the final goal fixed and asks \emph{which controller should act now?} CompPlan is closest in spirit, since it also sequences pre-trained policies with a learned model; but its policies form one family parameterized by a subgoal or latent $z$, its jumpy models condition on $z$, and it searches over which $z$ to pursue next, whereas \wmpa{} uses a policy-agnostic dynamics model to arbitrate among heterogeneous policies trained by different algorithms, never changing the goal.
 Hand-offs between a task policy and a recovery policy in safety-critical offline GCRL \citep{cao2024rbsl} are policy switching with two controllers and a hand-set cost threshold as the trigger; \wmpa{} switches over an arbitrary bank, with the model-based score as the trigger. World models, in turn, are used for action-level planning or training an amortized policy \citep{schrittwieser2020muzero,hafner2023dreamerv3}; sampling-based MPC with learned models \citep{williams2017mppi,hansen2022tdmpc} motivates the action-level baseline of \cref{app:ablations} which perturbs the first action, and hierarchical or reset-free world models \citep{schiewer2024hierarchical,yang2024resetfree} likewise generate behavior rather than evaluate it. \wmpa{} uses the model only to evaluate closed-loop behaviors already in the bank.

\paragraph{Policy switching, options, and learned hierarchies.}
Selecting among fixed policies by simulating each is a long-standing idea: policy switching and parallel rollout execute, at every state, the base policy with the highest simulated return, and both provably improve upon every base policy \citep{chang2004parallel,bertsekas1999rollout}; generalized policy improvement (GPI) achieves the same result with per-policy action-values, provided they are comparable \citep{barreto2017successor}. In the options framework, each frozen member of $\mathbf{\Pi}$ is an option with a fixed-duration termination condition, and re-arbitrating every $c$ steps is a scheduled form of option interruption, which is guaranteed not to degrade performance when the switching value is exact \citep{sutton1999between}. \wmpa{}
is a descendant of this lineage with a learned model, a truncated horizon, and goal conditioning: the model supplies the rollouts that the classical methods obtain from a simulator, and the shared value function avoids the need for the comparable per-policy action-value estimates required by GPI.
Model-based hierarchical RL is a distinct lineage. Director selects latent goals for a worker trained in imagination \citep{hafner2022director}; SkiMo plans with a skill-dynamics model learned jointly with a skill VAE \citep{shi2022skimo}; SPlaTES shapes skills to be predictable by an abstract world model and plans over them with iCEM \citep{gurtler2025splates}; Puppeteer trains a separate high-level world model for each task on top of a frozen tracking controller \citep{hansen2025puppeteer}; and a learned gate over a frozen planner decides how much search to perform \citep{muppidi2026finding}. All of these methods learn the high level, most also co-adapt the low level, and their models generate or search over the executed behavior. \wmpa{} learns neither level: the candidates are complete goal-conditioned policies trained independently with different algorithms, the dynamics model is trained once on the dataset without reference to any policy, and the model only arbitrates among them.

\section{Limitations and Future Work}
\label{sec:limitations}
This study uses state observations and per-environment transition models. Pixel observations would require either a latent world model with a shared visual encoder across all policies or a world model that produces realistic images. We leave visual environments to future work. Additionally, \wmpa{} selects among existing behaviors and cannot create new ones: when every member is at the task floor, one member is already saturated, or the members fail on the same subtasks, arbitration typically does no better than the best policy.
Finally, arbitration adds a few milliseconds of computation per step (\cref{app:runtime}), a modest overhead; future work could remove it by using the world model offline, either to distill the arbiter's choices into a single policy or to generate corrected training data in the style of DAgger.

\section{Conclusion}
\label{sec:conclusion}

We introduced World-Model Policy Arbiter (\wmpa{}), which selects among frozen goal-conditioned policies at test time by rolling each forward in a learned world model, scoring the imagined futures with a shared value function, and executing the highest-scoring policy for a short commitment interval before selecting again. Without retraining any policy, it improves on the best policy in its bank by \MeanGain{} points on average and by 33--41 points on several multi-object manipulation tasks. Notably, it significantly exceeds a per-episode hindsight oracle on \NumAboveUnionSig{} datasets, indicating a benefit of composing policies within an episode. Taken together, these results suggest viewing world models as test-time evaluators of existing behaviors, not only as generators of new actions.

\label{page:endconclusion}
\section*{Reproducibility Statement}
Our code is publicly available at \url{https://github.com/junwei0102/World-Model-Policy-Arbiter}. It includes the training entry points for the policy bank, which wrap the unmodified OGBench reference implementations, and for the world model with its value head; the evaluation script that runs \wmpa{} and every control and ablation reported in this paper; and the scripts that aggregate the per-episode logs into success rates and paired hierarchical-bootstrap intervals. A per-dataset registry records the bank, the checkpoints, and the reported interval for all \NumEnvs{} datasets, and the software versions are listed. Reset seeds are a deterministic function of the dataset, the goal, and the episode index, so any method can be re-evaluated on exactly the initial states used here. All datasets are the public OGBench releases; trained checkpoints and raw evaluation logs are omitted because of their size. 



\label{page:endmain}

\bibliographystyle{iclr2027_conference}
\bibliography{references}
\appendix

\section{Per-Dataset Planning Settings}
\label[appendix]{app:hyperparams}

\Cref{alg:wmpp} summarizes the arbitration loop of \cref{sec:method}.

\begin{algorithm}[htbp]
\caption{World-Model Policy Arbiter}
\label{alg:wmpp}
\begin{algorithmic}[1]
\Require frozen policies $\PiBank$, dynamics ensemble $\{\hat f_{\theta_e}\}_{e=1}^E$, value ensemble $\{V_{\phi_e}\}_{e=1}^E$, goal $g$, planning horizon $k$, commitment $c$
\While{episode not terminated}
    \For{$i=1,\ldots,M$}
        \State Roll out $\pi_i$ for $k$ imagined steps through every ensemble member
        \State Compute $S_i$ with \cref{eq:score}
    \EndFor
    \State $i^*\gets\arg\max_i S_i$
    \State Execute frozen policy $\pi_{i^*}$ in the real environment for up to $c$ steps
\EndWhile
\end{algorithmic}
\end{algorithm}

\Cref{tab:settings} lists the settings used throughout \wmpa{} main results (\cref{tab:main}).

\begin{table}[htbp]
\caption{\textbf{Settings per task family}. The value head is the parameterization of the scorer applied to every candidate's imagined states (metric: latent-distance value with separate encoders; direct: a single network on the state--goal pair, \cref{sec:value}); the single interval $k{=}c$ is the imagination horizon and commitment of \wmpa{} and, identically, the re-draw interval of \randomswitch{}; the remaining columns ($H$, $\gamma$, $\kappa$, $\lambda_{\mathrm{smooth}}$) are the training settings of the world model and its value head.}
\label{tab:settings}
\centering
\small
\setlength{\tabcolsep}{5pt}
\resizebox{\linewidth}{!}{\input{tables/settings_family}}
\end{table}

\begin{table}[t]
\caption{\textbf{Sensitivity to the imagination horizon $k$ and the commitment $c$} on three datasets with large gains ($k \ge c$). A row fixes $k$, a column fixes $c$, and the diagonal is $k=c$; the cell reported in \cref{tab:main} is underlined. Success is the mean $\pm$ the half-width of the $95\%$ interval, switches are policy changes per episode, and latency is the mean wall-clock time per environment step on the CPU evaluation nodes. A cell costs $18k/c$ model calls per step, so only the diagonal keeps the budget of \cref{tab:main}.}
\label{tab:kc-grid}
\centering\small
\resizebox{\linewidth}{!}{\input{tables/kc_grid}}
\end{table}

\Cref{tab:kc-grid} varies $k$ and $c$ on three datasets with large gains. The interval is not tuned extensively for each dataset; instead, it is kept the same within each dataset family. Across all \KcGridNumCells{} cells, every cell retains a gain of at least \KcGridMinGain{} points over the best policy. This table shows that the large gains on these manipulation tasks are robust to the hyperparameter setting and are attributable to the \wmpa{} algorithm itself.

\section{No-Model Switching Controls}
\label[appendix]{app:random-c}

\Cref{tab:random-c} reports \randomswitch{} alongside \wmpa{} on every dataset, with both methods using the same commitment interval $c$, so the two columns differ only in how the next policy is chosen: a uniform draw from the bank, or the learned score. Uninformed re-drawing on the same schedule averages \MeanRandom\% against \MeanWMPP\% for \wmpa{}; paired per episode, \wmpa{} is significantly better on \NumWMPPvsRandomSig{} datasets (\cref{sec:ablation}).

\begin{table}[htbp]
\caption{\textbf{\randomswitch{} versus \wmpa{} at a matched commitment interval} (the per-family value of $c$ in \cref{tab:settings}). Best is the best bank policy on each dataset (\cref{tab:main}). In each row, entries that reach at least $95\%$ of the maximum over the three columns are in bold.}
\label{tab:random-c}
\centering
\scriptsize
\setlength{\tabcolsep}{3pt}
\input{tables/random_by_c}
\end{table}

\Cref{tab:stall} reports results for the stall-restart controller (\cref{sec:ablation}), which switches in response to \emph{evidence of being stuck} rather than on a fixed schedule.
The controller draws its first policy uniformly at random from the bank and keeps it until the displacement over the last $m$ steps falls below $\varepsilon$; it then restarts with one of the \emph{other} members, drawn uniformly at random, and waits at least $m$ steps before the trigger can fire again.
Because the controller uses neither a world model nor a value function, it serves as a test of whether the gains from switching come from escaping stalls rather than from choosing the right policy.
Across $m\in\{5,10,25,50\}$ (shorter windows trigger restarts more readily), the controller exceeds \wmpa{} on only \NumStallBeatsWmpp{} of the \NumEnvsStall{} datasets: \texttt{cube-triple-play}, where all methods have near-zero success, and \texttt{pointmaze-medium-navigate}, where it reaches 94--99\% success, compared with 79\% for \wmpa{}.
\randomswitch{} also reaches this range there (\cref{tab:random-c}).
A deterministic policy that stalls keeps stalling, and a restart with a different member can escape the stall, whereas \wmpa{} may keep re-selecting the stalled policy in such states.
A stall trigger is therefore a natural addition to \wmpa{} on tasks where policies stall.
On the manipulation datasets, however, the stall-restart controller does not recover the gains of \wmpa{}, so these gains do not come from escaping stalls alone.

\begin{table}[htbp]
\caption{\textbf{Stall-restart control}: a no-model controller that keeps the incumbent policy and switches to a random \emph{other} policy only when the normalized displacement over the last $m$ steps falls below $\varepsilon$; same bank, episodes, and protocol as \cref{tab:main}. Numbers within $95\%$ of the row maximum are in bold.}
\label{tab:stall}
\centering
\small
\setlength{\tabcolsep}{4pt}
\input{tables/stall}
\end{table}

\section{Model-Based Controls and Scorer Comparisons}
\label[appendix]{app:ablations}

\Cref{tab:search} is discussed in \cref{sec:main_res} and \cref{tab:sim-all} is discussed in \cref{sec:ablation}. \Cref{tab:search} exchanges the portfolio for every-step action-level search on the best policy with the same horizon $k$ and value; \cref{tab:sim-all} replaces the learned dynamics with true-simulator branches on every dataset.

\Cref{tab:scorer-selector} varies only the value head or the selector, holding everything else fixed, on all eighteen datasets. Q-select, the model-free selector built from the twin-Q value function, is significantly above the best policy on \MechNumAboveBestQselOne{} of \MechNumEnvsQselOne{} datasets at $c{=}1$ and on \MechNumAboveBestQselK{} at $c{=}k$; compared with direct-value \wmpa{}, which scores imagined states with the state-value function $V$ of the same agent, it trails by \MechMeanDeltaDirectQselK{} points on average at a matched commitment interval, significantly so on \MechNumUpDirectQselK{} of \MechNumEnvsDirectQselK{} datasets. Replacing the default value head with the alternative reduces mean success on most datasets and lowers macro-average success by approximately 20 percentage points. The effect is nevertheless task dependent: on cube-triple-play, the direct head raises mean success from 1\% to 9\%.

\Cref{tab:native-critic} tests whether the value functions of the policies themselves can replace the shared value function in the setting most favorable to them.
In this setting, the bank is restricted to GCIQL, GCIVL, and HIQL, whose value functions are trained to estimate the same quantity (all three learn IQL-style value functions with the same reward and discount factor).
Each value function scores the imagined rollout of its own policy (own-value scoring), while the world model and the evaluation episodes remain unchanged.
Own-value scoring retains none of the gain obtained with the shared value function.
Averaged over the \NumEnvs{} datasets, the success rate of own-value scoring is 4 percentage points below that of the best of the three members, whereas scoring with the shared value function improves upon this member by 11 percentage points.
The shared value function also yields significant gains over this member and over own-value scoring on 12 and 14 datasets, respectively.
Under own-value scoring, the selection collapses onto the member with the most optimistic value function: on most datasets, the same member is selected at the first arbitration point in more than 90\% of the episodes and is rarely replaced thereafter.
For instance, on \texttt{antmaze-large-navigate}, the arbiter persistently selects GCIVL, which succeeds in 19\% of the episodes on its own, rather than HIQL, which succeeds in 90\%.
These results indicate that scoring every rollout with a single value function is essential for a meaningful comparison among the policies.

\Cref{tab:value-head-selection} presents a post hoc diagnostic analysis of value-head ranking quality; this analysis did not influence the value-head choices reported in \cref{tab:main}. We assess each head by measuring how closely its ranking of states agrees with the observed time to reach a shared goal in offline trajectories. We draw a state $s_a$ and a later state $g$ from one trajectory, and a state $s_b$ from a second trajectory that also reaches $g$ under the success predicate of the environment; a head is counted as concordant when it assigns the higher value $V(\cdot,g)$ to the state with the shorter observed time to reach $g$. Pairs whose arrival times differ by fewer than $20$ steps are discarded, and confidence intervals are obtained by bootstrapping over trajectories. The concordance criterion selects the metric head on \texttt{cube-double-play} and \texttt{scene-play} and the direct head on \texttt{puzzle-4x4-play}, in agreement with the heads used in \cref{tab:main}; on \texttt{cube-triple-play}, it favors the direct head, which is indeed the better of the two in online evaluation (\cref{tab:scorer-selector}). Taken together, these results suggest that this criterion can guide the selection of the value head on new datasets for which a goal-attainment predicate is available.

\begin{table}[htbp]
\caption{\textbf{WM action search vs.\ arbitration} with the same horizon
$k$ and the same value function on every dataset. The action search applies
sampling-based MPC on top of the best policy. At every environment step, the
candidates are the deterministic action $\pi(s_t,g)$ of the policy and
$N{-}1{=}5$ Gaussian perturbations of this action ($\sigma{=}0.2$). Each
candidate is imagined for $k$ steps: only the first action differs across
candidates, and the remaining $k{-}1$ steps follow the deterministic output
of the policy in closed loop on the imagined states, exactly as in the
rollout of a candidate policy in \wmpa{}. The candidates are scored as in
\wmpa{}, the first action of the best candidate is executed, and the search
is repeated at the next step ($c{=}1$). The action search uses
$N E k = 18k$ model steps per environment step, compared with $M E = 18$ for
\wmpa{}. The two budgets are equal on the mazes, whereas the budget of the
action search is $5\times$ larger on the cube family and $10\times$ larger
on the scene and puzzle families. On the puzzle family, both methods use the
direct value. Selecting the best policy per dataset as a base for MPC serves as oracle. Numbers that are at least $95\%$ of the row maximum are shown
in bold.}

\label{tab:search}
\centering
\small
\setlength{\tabcolsep}{4pt}
\resizebox{\linewidth}{!}{\input{tables/search}}
\end{table}

\begin{table}[htbp]
\caption{\textbf{True-simulator branches} inside \wmpa{} at the selected cell on every dataset (same value head and commitment; simulator steps replace model calls). $\Delta$ is the paired contrast simulator $-$ \wmpa{}; on the puzzle family the branches are scored with the direct value at the reported cell. Numbers within $95\%$ of the row maximum are shown in bold (${}^{*}$: interval excludes zero).}
\label{tab:sim-all}
\centering
\small
\setlength{\tabcolsep}{4pt}
\resizebox{\linewidth}{!}{\input{tables/sim_all}}
\end{table}

\section{Score Ablations}
\label[appendix]{app:scoreabl}

\Cref{tab:ablations} perturbs the score of \cref{eq:score} at the reported cell on every dataset: horizon aggregation (max over $h\le k$ $\rightarrow$ final state / mean) and ensemble reduction (mean over members $\rightarrow$ $\min$ / mean$-$std). Across datasets the reductions move the macro-average by at most a point (final state \AblMeanDeltaAggLast, mean \AblMeanDeltaAggMean, $\min$ \AblMeanDeltaEnsMin, mean$-$std \AblMeanDeltaEnsLcb), so neither the optimism of the max nor the ensemble mean carries the gains; pessimistic reductions do not eliminate the low-success failure cases.

\begin{table}[t]
\caption{\textbf{Comparison of value heads and selectors} at a fixed cell on all the datasets. The world model, the bank, and the evaluation episodes are held fixed; only the scorer or the selector varies. \wmpa{} scores imagined states with either the metric value or the direct value; $^{\dagger}$ marks the head reported in \cref{tab:main}. Q-select uses the same twin-Q value function to rank the action proposed by each policy at the current state, $i^*=\arg\max_i\min_j Q_{\omega,j}(s,\pi_i(s,g),g)$, without any imagined states, re-selecting either at every step ($c{=}1$) or every $c{=}k$ steps. The last column is the paired contrast between the two selectors that share the same underlying agent, direct-value \wmpa{} minus Q-select at $c{=}k$, with its $95\%$ interval: it isolates the contribution of scoring imagined successor states with the value function. Entries are the mean success rate $\pm$ the half-width of the $95\%$ hierarchical-bootstrap interval, and values within $95\%$ of the row maximum are shown in bold (${}^{*}$: interval excludes zero).}
\label{tab:scorer-selector}
\centering
\small
\setlength{\tabcolsep}{4pt}
\IfFileExists{tables/scorer_selector.tex}{\resizebox{\linewidth}{!}{\input{tables/scorer_selector}}}{(pending)}
\end{table}

\begin{table}[t]
\caption{\textbf{Own-value function versus a shared value function} on the three bank members whose value networks estimate the same quantity (GCIQL, GCIVL, HIQL; same episodes and protocol as \cref{tab:main}, reported cell of each family). \wmpa{}-own scores member $i$'s imagined rollout with member $i$'s own value network; \wmpa{}-shared scores every rollout with the head of \cref{tab:main}. Bold marks values within $95\%$ of the row maximum.}
\label{tab:native-critic}
\centering\small
\input{tables/native_critic}
\end{table}

\begin{table}[t]
\caption{\textbf{Selecting the value head for datasets.} We report the concordance of each head with the ranking labels described in \cref{app:ablations}, that is, the fraction of state pairs on which the head assigns the higher value to the state that reaches the shared goal sooner. The last column indicates the head used in \cref{tab:main}, which was chosen per task family before any evaluation episode was run. An asterisk (${}^{*}$) indicates that the paired confidence interval excludes zero; in such cases, the head selected by the criterion is shown in bold.}
\label{tab:value-head-selection}
\centering
\small
\IfFileExists{tables/value_head_selection.tex}{\input{tables/value_head_selection}}{(pending)}
\end{table}

\begin{table}[htbp]
\caption{\textbf{Score ablations at the reported cell}: horizon aggregation and ensemble reduction of \cref{eq:score}; same protocol and episodes as \cref{tab:main}; on the puzzle family the ablations are of the direct value at the reported cell. Entries are mean success $\pm$ the half-width of the 95\% hierarchical-bootstrap interval (bank seeds, then episodes), rounded to integers; numbers within $95\%$ of the row maximum are in bold.}
\label{tab:ablations}
\centering
\scriptsize
\setlength{\tabcolsep}{3pt}
\resizebox{\linewidth}{!}{\input{tables/ablations}}
\end{table}

\section{Bank Composition}
\label[appendix]{app:bank-composition}

Every bank in the main study is trained on a single dataset. \Cref{tab:bank-composition} tests the next step: for an environment with two datasets, the six policies trained on its \texttt{play} data and the six trained on its \texttt{noisy} data are pooled into a single twelve-member bank, and \wmpa{} arbitrates over all twelve members with the same world model, value function, interval, and episodes as the six-member run on that dataset (nothing is retrained; the extra members are simply loaded from the checkpoints of the other dataset). This is the simplest case of a heterogeneous portfolio: same environment and interface, different training data. It isolates how arbitration handles members whose failure modes differ because their training data differ. For each base dataset, the table reports the best policy of the pooled bank and \wmpa{} over the pooled bank, and two paired contrasts: \wmpa{} versus the best policy of the pooled bank, and pooled-bank \wmpa{} versus six-member \wmpa{} on the same episodes. On \texttt{puzzle-4x4-play}, the best policy of the pooled bank is a foreign member, namely the GCIQL policy trained on the \texttt{noisy} data, which succeeds on \BankPooledBestPuzzleFourbyFourPlay\% of the \texttt{play} episodes, compared with \BankFullBestPuzzleFourbyFourPlay\% for the native one.

Where the added members contribute new competencies, the pooled bank outperforms the original one (\texttt{cube-double-play}: \BankDeltaFullPooledCubeDoublePlay); where they largely duplicate competencies already present, the pooled bank matches the original (\texttt{scene-play}: \BankDeltaFullPooledScenePlay; \texttt{puzzle-4x4-play}: \BankDeltaFullPooledPuzzleFourbyFourPlay), which indicates that the scorer is not misled by redundant or out-of-distribution candidates; moreover, on \texttt{puzzle-4x4-play}, \wmpa{} still gains \BankDeltaPooledPuzzleFourbyFourPlay{} points over a best policy that is itself foreign. We interpret this as evidence that \wmpa{} can be applied without re-tuning to banks assembled from policies trained on different data, provided that they share the same observation, action, and goal interfaces.

\begin{table}[htbp]
\caption{\textbf{Pooled play+noisy banks}: the six policies of the base dataset plus the six trained on its sibling dataset, arbitrated with the base dataset's world model, value, and interval on the base dataset's episodes. $P$: bank size; best policy of that bank and its success; \wmpa{} over that bank; paired contrasts of \wmpa{} against the bank's best policy and of pooled-bank \wmpa{} against the six-member \wmpa{} of the row above. Numbers within $95\%$ of the row maximum are in bold. (${}^{*}$: interval excludes zero).}
\label{tab:bank-composition}
\centering
\scriptsize
\resizebox{\linewidth}{!}{\input{tables/bank_composition}}
\end{table}

\section{World-Model Architecture and Training Details}
\label[appendix]{app:wm}

\paragraph{Data.} Each auxiliary model is trained on the OGBench training split of its dataset, with the benchmark's validation split used only for monitoring. Observation and one-step-delta statistics ($\mu_s,\sigma_s,\mu_\Delta,\sigma_\Delta$) are computed once from valid training transitions and stored in the checkpoint.

\paragraph{Dynamics ensemble.} $E=3$ members with disjoint parameters; each is an MLP with three hidden layers of width $512$, GELU activations, and layer normalization, mapping the normalized state and action to a normalized delta of the full observation vector (observation dimension $28$--$115$ depending on the dataset). Training unrolls each member for $H$ steps ($H=5$ for cube, scene, and puzzle; $H=10$ for the two mazes) with a curriculum that increases the unrolled horizon from $1$ to $H$ over the first $10^5$ steps. Members share the training batches and are diversified only by independent parameter initialization (and by unrolling their own predictions); at arbitration time value head $e$ scores the rollout of dynamics member $e$, and the ensemble is reduced by the mean over member pairs.

\paragraph{Metric value head.} For each ensemble member, a goal representation $\psi$ (an MLP $512\times3\rightarrow10$, length-normalized to $\sqrt{10}$) and state and goal encoders $\varphi_S,\varphi_G$ (each an MLP $512\times3\rightarrow64$) define $V=-\|\varphi_S(s)-\varphi_G(\psi(g))\|_2$, following LAVL \citep{kang2026lavl}; target networks are Polyak-averaged with $\tau=0.005$, and goals are relabeled with the OGBench value-goal mixture ($0.2$ current, $0.5$ geometric future, $0.3$ random). The per-family $\gamma$, $\kappa$, and $\lambda_{\mathrm{smooth}}$ are listed in \cref{tab:settings}. Where $\lambda_{\mathrm{smooth}}>0$ (mazes and scene), the local-smoothness regularizer softly bounds the one-step change of the value toward a random goal $\tilde g$,
\begin{equation}
\mathcal{L}_{\mathrm{smooth}}
= \E\Big[0.1\,\mathrm{softplus}\Big(10\big[\big(V_{\phi_e}(s',\tilde g)-V_{\phi_e}(s,\tilde g)\big)^2-\tau_V^2\big]\Big)\Big],
\qquad
\tau_V = 1-(1-\gamma)\,\E[\bar V],
\label{eq:smooth}
\end{equation}
where $\E[\bar V]$ is an exponential moving average of the mean value; this term is added to \cref{eq:expectile} with weight $\lambda_{\mathrm{smooth}}$. The expectile target is the implicit-maximization heuristic of IQL-style value learning \citep{kostrikov2022iql,park2023hiql}. Under stochastic transitions, it is also optimistic with respect to next-state noise, biasing the scorer; pessimistic ensemble reductions that counteract this bias change macro-average success by at most one point (\cref{app:scoreabl}).

\paragraph{Direct value head (puzzle).} The direct value $V_\theta$ follows the design of GCIQL, namely an MLP with three hidden layers of width $512$ and layer normalization, and is trained with the puzzle recipe of the reference implementation: the same relabeling mixture, $\gamma=0.99$, $\tau=0.005$, $\kappa=0.9$, and no value or target clipping. The two action-value heads $Q_{\omega,1},Q_{\omega,2}$ share this architecture and the training batches; $\bar Q$ in \cref{eq:direct-expectile} is the minimum of their target copies, and the same $\min_j Q_{\omega,j}$ is the scorer of the model-free Q-select baseline (\cref{tab:scorer-selector}).

\paragraph{Optimization.} The dynamics and value losses are summed with unit weights and optimized with Adam (learning rate $3\times10^{-4}$) at batch size $1024$ for $10^6$ steps on a single H100 MIG slice. The checkpoint at $10^6$ steps is used throughout. The same code path serves both training and test-time imagination, so that predicted states can be consumed directly by every frozen policy. The joint optimization described here applies to the dynamics and metric-value networks. The direct scorer is loaded from the pretrained GCIQL checkpoint and remains frozen during model training.
\section{Evaluation Protocol, Statistics, and the Static Per-Episode Oracle}
\label[appendix]{app:bank-stats}

\paragraph{Evaluation protocol and statistics.}
Every number follows the official OGBench evaluation: five goals per dataset, $50$ episodes per goal, deterministic actions ($250$ episodes per bank seed, $750$ per method).
Reset seeds are a deterministic function of the dataset, the goal, and the episode index, so every method in this paper, each bank policy, \wmpa{}, and every model-free and model-based control, is evaluated on the same episodes, and every contrast is paired per episode.
Uncertainty is a hierarchical bootstrap over bank seeds and then episodes ($10^4$ resamples, 95\% percentile intervals): intervals on success rates resample a method's episode outcomes, and intervals on contrasts resample the per-episode differences; a difference is significant when its interval excludes zero.

\Cref{tab:bank-stats} reports, for each dataset, the bank statistics introduced in \cref{sec:when}. Here, $B_{\max}$ denotes the success rate of the best policy, and $D_{\mathrm{gap}}$ denotes the performance gap between the best and second-best policies. Both quantities are computed from the scores in \cref{tab:main}. Because every bank member is evaluated on the same episodes as \wmpa{} and \randomswitch{}, namely \OracleEpisodes{} episodes per dataset ($5$ goals $\times$ \OracleEpsPerGoal{} episodes $\times$ $3$ bank seeds), the static per-episode oracle requires no additional runs. When we state that \wmpa{} exceeds the oracle, we count only the datasets on which it does so by at least one percentage point (\NumAboveUnion{} datasets). As a pre-specified criterion, we also report a paired hierarchical bootstrap confidence interval for the per-episode difference \wmpa$-U$ on the same episodes: the interval lies entirely above zero on \NumAboveUnionSig{} datasets.

\begin{table}[htbp]
\caption{\textbf{Bank statistics and the static per-episode oracle (\%)}. Here, $U$ is the success rate of a hindsight selector that picks a successful policy for each episode whenever one exists, and $U-B_{\max}$ is the headroom available to any per-episode selector that does not switch policies within an episode. The \wmpa{} column reports the scores from \cref{tab:main}, and $\Delta$\wmpa{} is the difference between \wmpa{} and $B_{\max}$. The \wmpa$-U$ column reports the paired per-episode difference between \wmpa{} and the oracle on the same episodes, together with a hierarchical bootstrap confidence interval computed over bank seeds and episodes. Among $B_{\max}$, $U$, and \wmpa{}, entries at or above $95\%$ of the row maximum are shown in bold; note that $U$ is a hindsight bound rather than a deployable method. The symbol ${}^{*}$ indicates that the confidence interval excludes zero.}
\label{tab:bank-stats}
\centering
\small
\setlength{\tabcolsep}{4pt}
\resizebox{\linewidth}{!}{\input{tables/bank_stats}}
\end{table}

\section{Selection Behavior and Runtime}
\label[appendix]{app:runtime}

\paragraph{Training cost.} On the same GPU slice (an H100 3g.40gb MIG partition), training the world model together with its value head takes $3.3$ hours per dataset on average, compared with $2.7$ hours for a single policy and approximately $16$ hours for a six-policy bank, a cost that the best-policy baseline also incurs because all candidates must be trained before one is kept. Test-time latency grows linearly with the bank size $M$, because the arbiter makes $ME$ model calls per environment step: the twelve-member pooled banks of \cref{app:bank-composition} run at $5$--$6$\,ms per step, roughly twice the per-arbitration latency of a six-member bank.

\paragraph{Arbitration behavior.} \Cref{tab:runtime} reports how the portfolio is used: episode length, switches per episode for \wmpa{} and \randomswitch, arbitration decisions per episode, world-model calls per environment step (dynamics; value calls are identical), the entropy of the policy-usage distribution (bits; $\log_2 6\approx2.58$ for uniform usage of a six-member bank), and wall-clock latency per environment step. Latency was measured on the evaluation nodes (AMD EPYC 9135, 8 threads, JAX on CPU, no GPU) and includes batched policy inference, dynamics and value evaluation, and the winner's action. \wmpa{} switches less often than \randomswitch{} at the same interval (it frequently re-selects the incumbent), and its usage entropy is high on the datasets with the largest gains, showing that the arbiter does not collapse to a single bank member. Evidence that within-episode composition improves success is provided separately by the oracle comparison in \cref{app:bank-stats}.
 
\begin{table}[htbp]
\caption{Selection behavior and runtime of \wmpa{} (selected $(k,c)$) and \randomswitch{} (same $c$). ep.\ len and decisions/ep belong to \wmpa{}.}
\label{tab:runtime}
\centering
\scriptsize
\setlength{\tabcolsep}{3pt}
\resizebox{\linewidth}{!}{\input{tables/runtime}}
\end{table}

\section{When Do Imagined Rollouts Support Useful Arbitration?}
\label[appendix]{app:wmpa_theory}

Fix a commitment interval $c$, and let $\pi_b$ be the fixed bank member
with the highest success probability under the evaluation distribution.
Write $J(\pi)$ for the probability that policy $\pi$ reaches the goal by
the deadline $T$. A decision state $x$ includes the environment state,
goal, remaining time, and success or termination status. Success and
termination are absorbing, and the final commitment interval is shortened
to the remaining episode length. There are $N=\lceil T/c\rceil$ decision
slots, with absorbing trajectories padded to the deadline.

The \emph{chunk advantage} of member $i$ relative to $\pi_b$ is
\begin{equation}
    A_i(x)
    = \Pr\!\left(\text{success}\mid x,\
        \pi_i\text{ for one chunk, then }\pi_b\right)
    - \Pr\!\left(\text{success}\mid x,\
        \pi_b\text{ throughout}\right).
    \label{eq:wmpa_chunk_advantage}
\end{equation}
This quantity measures the benefit of using member $i$ for the next
chunk while holding the subsequent controller fixed. Since $A_b(x)=0$,
the largest available advantage,
\begin{equation}
    G(x)=\max_i A_i(x)\geq 0,
    \label{eq:wmpa_opportunity}
\end{equation}
measures the opportunity to improve on the best fixed member at that
decision point.

Let $S_i(x)$ be the imagined-rollout score used by \wmpa{}, and let
$S_i^\star(x)$ be the expected value of the same complete scoring rule
under the true dynamics, with the learned value head unchanged. In
particular, the expectation is taken after applying the rule's
aggregation over ensemble members and maximum over rollout steps. Any
sampling error in the computed score $S_i(x)$ is included in the
discrepancy below. Define
\begin{equation}
    e(x)=\max_i\left|S_i(x)-S_i^\star(x)\right|.
    \label{eq:wmpa_score_discrepancy}
\end{equation}
Thus, $e(x)$ measures the largest discrepancy between the computed and
true-dynamics scores across the bank. After success or termination, set all
advantages and scores to zero.

\begin{wmpaassumption}[Separation of inferior choices]
\label{ass:wmpa_score_separation}
For some tolerance $0\leq\delta<1$ and margin $\Delta>0$, at every
active decision state visited by \wmpa{},
\begin{equation}
    A_i(x)<G(x)-\delta
    \quad\Longrightarrow\quad
    S_i^\star(x)\leq\max_j S_j^\star(x)-\Delta.
    \label{eq:wmpa_score_separation}
\end{equation}
\end{wmpaassumption}

The assumption requires the value head, when applied to true-dynamics
rollouts, to separate any member whose advantage is more than $\delta$
below the best by a score margin of at least $\Delta$. It does not require
correct ordering among members whose advantages are within $\delta$ of
the best. The tolerance is measured in success-probability units, whereas
the margin is measured in the scorer's own units.

\begin{wmpatheorem}[Improvement through robust selection]
\label{thm:wmpa_robust_selection}
Under Assumption~\ref{ass:wmpa_score_separation}, let $X_n$ be the decision
state at slot $n$ under \wmpa{}, and let $\mathcal A_n$ be the event that
$X_n$ is active. Then
\begin{equation}
\begin{split}
    J(\pi_{\mathrm{WMPA}})-J(\pi_b)
    \geq \sum_{n=0}^{N-1}
    \biggl(&\mathbb E_{\mathrm{WMPA}}[G(X_n)]
    -\delta\Pr_{\mathrm{WMPA}}(\mathcal A_n)\\
    &-(1-\delta)\Pr_{\mathrm{WMPA}}\!\left(
        \mathcal A_n,\,
        e(X_n)\geq\frac{\Delta}{2}\right)\biggr).
\end{split}
\label{eq:wmpa_robust_improvement}
\end{equation}
In particular, a positive right-hand side guarantees improvement over
every fixed bank member.
\end{wmpatheorem}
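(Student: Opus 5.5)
The plan is a performance-difference (telescoping) argument over the $N$ decision slots, with $\pi_b$ as the fixed continuation, followed by a per-slot case analysis on whether the rollout-score error is small. For $n=0,\dots,N$, let $\pi^{(n)}$ be the hybrid controller that follows \wmpa{} for the first $n$ chunks and $\pi_b$ thereafter. Then $\pi^{(0)}=\pi_b$ and $\pi^{(N)}=\pi_{\mathrm{WMPA}}$, so
\[
J(\pi_{\mathrm{WMPA}})-J(\pi_b)=\sum_{n=0}^{N-1}\bigl(J(\pi^{(n+1)})-J(\pi^{(n)})\bigr).
\]
The hybrids $\pi^{(n)}$ and $\pi^{(n+1)}$ agree up to slot $n$, so $X_n$ has the same law under both, namely its law under \wmpa{}. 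At $X_n$, the first hybrid follows $\pi_b$ throughout, while the second runs $\pi_{i_n}$ for one chunk and then $\pi_b$, where $i_n=\arg\max_i S_i(X_n)$ is \wmpa{}'s choice. Conditioning on $X_n$ and using the Markov property of the decision state (it contains remaining time and status, so the continuation value depends only on $X_n$), each increment equals $\mathbb E_{\mathrm{WMPA}}[A_{i_n}(X_n)]$. Inactive states contribute zero by convention.

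It therefore suffices to show pointwise, for active $x$ with chosen index $i$,
\[
A_i(x)\ \ge\ G(x)-\delta-(1-\delta)\,\mathbf 1\{e(x)\ge\Delta/2\},
\]
and then take expectations and sum over $n$. This splits into two cases.

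\textbf{Case 1: $e(x)<\Delta/2$.} Suppose $A_i(x)<G(x)-\delta$. Assumption~\ref{ass:wmpa_score_separation} gives $S_i^\star(x)\le S_{j^\star}^\star(x)-\Delta$ for a maximizer $j^\star$ of $S^\star$. Then
\[
S_i(x)<S_i^\star(x)+\Delta/2\le S_{j^\star}^\star(x)-\Delta/2<S_{j^\star}(x),
\]
contradicting the choice of $i$ as the argmax of $S$. Hence $A_i(x)\ge G(x)-\delta$. Ties in the argmax cause no trouble because these inequalities are strict.

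\textbf{Case 2: $e(x)\ge\Delta/2$.} We only need the crude bound $A_i(x)\ge G(x)-1$, equivalently $G(x)-A_i(x)\le 1$, which holds because both quantities are differences of probabilities sharing the term $\Pr(\text{success}\mid x,\pi_b\text{ throughout})$. Writing $G-1=G-\delta-(1-\delta)$ gives the claimed form.

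Taking expectations converts the $\delta$ term into $\delta\Pr(\mathcal A_n)$, since it is incurred only on active states where $G$ and the indicator are defined. The indicator term becomes $(1-\delta)\Pr(\mathcal A_n,e(X_n)\ge\Delta/2)$. Summing over $n$ yields \eqref{eq:wmpa_robust_improvement}. The final claim follows because $J(\pi_b)\ge J(\pi_i)$ for every bank member $i$.

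The main obstacle is the telescoping identity itself. It requires that "$\pi_i$ for one chunk, then $\pi_b$" from $X_n$ matches exactly what $\pi^{(n+1)}$ does. This uses the fact that the hybrid's behavior after slot $n$ is independent of \wmpa{}'s internal randomness and history, and that the shortened final chunk and absorbing padding make the $N$ slots cover the horizon exactly. If the score $S_i$ involves sampling randomness, $i_n$ is random given $X_n$. In that case I would condition on $X_n$ together with that randomness; the pointwise bound holds for each realization, and $e(x)$ already absorbs the sampling error by definition.
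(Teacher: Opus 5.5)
Your proposal is correct and follows the paper's proof. It uses the same per-slot case split on $e(x)<\Delta/2$: separation forces $A_{\hat i}(x)\ge G(x)-\delta$, and otherwise the crude bound $G(x)-A_{\hat i}(x)\le 1$ applies. It combines this with the same advantage-telescoping identity $J(\pi_{\mathrm{WMPA}})-J(\pi_b)=\sum_n\mathbb E_{\mathrm{WMPA}}[A_{\hat i_n}(X_n)]$; your hybrid controllers $\pi^{(n)}$ only repackage the paper's telescoping of $V_b$ along the \wmpa{} trajectory, since $J(\pi^{(n)})=\mathbb E_{\mathrm{WMPA}}[V_b(X_n)]$.
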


The bound compares the available improvement with two sources of
selection loss at active decision slots: the tolerated difference $\delta$
from the best chunk advantage, and decisions at which score error is
large enough to overturn the separation margin. Errors smaller than half
the margin cannot cause a selection more than $\delta$ below the best
advantage.

\begin{proof}
At an active decision state $x$, let $\hat i$ maximize the computed score
and let $j$ maximize the true-dynamics score. Since
$S_{\hat i}(x)\geq S_j(x)$,
\begin{equation}
    S_j^\star(x)-S_{\hat i}^\star(x)\leq 2e(x).
    \label{eq:wmpa_margin_stability}
\end{equation}
If $e(x)<\Delta/2$, Assumption~\ref{ass:wmpa_score_separation} implies
$A_{\hat i}(x)\geq G(x)-\delta$. Otherwise,
$G(x)-A_{\hat i}(x)\leq 1$, since the common reference term cancels and
the remaining difference is between two success probabilities.
At an absorbing decision state, all advantages are zero. Consequently,
for every decision state $x$,
\begin{equation}
    A_{\hat i}(x)
    \geq G(x)-\delta\mathbf{1}\{x\text{ is active}\}
    -(1-\delta)\mathbf{1}\!\left\{
        x\text{ is active},\ e(x)\geq\frac{\Delta}{2}\right\}.
    \label{eq:wmpa_local_selection_bound}
\end{equation}

Let $V_b(x)$ denote the probability of success if $\pi_b$ is followed
from $x$ onward. By definition, the expected change in $V_b$ across a
chunk executed by member $i$ is $A_i(x)$. Summing these changes along
\wmpa{}'s trajectory cancels the intermediate reference values. At the
deadline, $V_b$ equals the success indicator; its expectation at the
initial state equals $J(\pi_b)$. Therefore,
\begin{equation}
    J(\pi_{\mathrm{WMPA}})-J(\pi_b)
    =\sum_{n=0}^{N-1}
    \mathbb E_{\mathrm{WMPA}}[A_{\hat i_n}(X_n)],
    \label{eq:wmpa_advantage_identity}
\end{equation}
where $\hat i_n$ is the member selected at slot $n$. Taking expectations
in~\eqref{eq:wmpa_local_selection_bound} and summing proves the result.
\end{proof}

The result identifies three requirements for useful arbitration: the bank
must offer locally advantageous alternatives, the value head must
separate substantially inferior choices under true dynamics, and score
errors must be small relative to that separation. Accurate dynamics
alone are insufficient if the value head favors unhelpful outcomes.
Conversely, imperfect imagined rollouts can support reliable selection
when their score errors preserve the relevant margin.
\end{document}

%% file: tables/numbers.tex
\newcommand{\NumEnvs}{18}

\newcommand{\MeanWMPP}{58}
\newcommand{\MeanBest}{44}
\newcommand{\MeanRandom}{39}
\newcommand{\MeanGain}{13}
\newcommand{\MeanWMPPminusRandom}{19}

\newcommand{\NumSigUp}{12}
\newcommand{\NumSigDown}{1}
\newcommand{\NumNS}{5}
\newcommand{\NumWMPPvsRandomSig}{13}

\newcommand{\NumRandomSigUp}{4}
\newcommand{\NumRandomSigDown}{10}

\newcommand{\NumAboveUnion}{10}

\newcommand{\NumAboveUnionSig}{7}
\newcommand{\NumBelowUnionSig}{5}

\newcommand{\NumBetterMean}{17}

\newcommand{\OracleEpsPerGoal}{50}

\newcommand{\OracleEpisodes}{750}

\newcommand{\GainCubeSinglePlay}{+14}

\newcommand{\DgapCubeSinglePlay}{22}

\newcommand{\HeadroomCubeSinglePlay}{10}

\newcommand{\GainCubeDoublePlay}{+33}
\newcommand{\WmppCubeDoublePlay}{69}

\newcommand{\UnionCubeDoublePlay}{57}

\newcommand{\GainCubeDoubleNoisy}{+35}

\newcommand{\GainScenePlay}{+36}
\newcommand{\WmppScenePlay}{87}

\newcommand{\UnionScenePlay}{74}

\newcommand{\GainSceneNoisy}{+41}

\newcommand{\GainPuzzleThreebyThreePlay}{+9}

\newcommand{\DgapPuzzleThreebyThreePlay}{79}

\newcommand{\HeadroomPuzzleThreebyThreePlay}{1}

%% file: tables/numbers_ablations.tex
\newcommand{\AblMeanDeltaAggLast}{+1}

\newcommand{\AblMeanDeltaAggMean}{-1}

\newcommand{\AblMeanDeltaEnsMin}{+0}

\newcommand{\AblMeanDeltaEnsLcb}{+0}

\newcommand{\AblMeanPolicyMpc}{53}

\newcommand{\AblNumDownPolicyMpc}{7}
\newcommand{\AblNumUpPolicyMpc}{1}
\newcommand{\AblNumEnvsPolicyMpc}{18}

\newcommand{\AblMeanDeltaSimSel}{+2}

\newcommand{\AblNumUpSimSel}{2}
\newcommand{\AblNumEnvsSimSel}{18}

\newcommand{\NumStallBeatsWmpp}{2}

\newcommand{\NumEnvsStall}{18}

\newcommand{\BankDeltaFullPooledCubeDoublePlay}{+10 [+6, +15]*}

\newcommand{\BankDeltaFullPooledScenePlay}{+1 [-3, +5]}

\newcommand{\BankFullBestPuzzleFourbyFourPlay}{24}

\newcommand{\BankPooledBestPuzzleFourbyFourPlay}{34}
\newcommand{\BankDeltaPooledPuzzleFourbyFourPlay}{+18 [+11, +26]*}
\newcommand{\BankDeltaFullPooledPuzzleFourbyFourPlay}{+2 [-4, +8]}

%% file: tables/numbers_mechanism.tex
\newcommand{\MechNumEnvsQselOne}{18}

\newcommand{\MechNumAboveBestQselOne}{10}

\newcommand{\MechNumAboveBestQselK}{5}

\newcommand{\MechNumEnvsDirectQselK}{18}

\newcommand{\MechNumUpDirectQselK}{11}

\newcommand{\MechMeanDeltaDirectQselK}{+7}

\newcommand{\MechMeanDeltaFullQselOne}{+12}

\newcommand{\MechFamFullQselOnePuzzle}{+2}

%% file: tables/numbers_kcgrid.tex
\newcommand{\KcGridMinGain}{26}

\newcommand{\KcGridNumCells}{18}

%% file: tables/main_results.tex
\begin{tabular}{llcccccccc}
\toprule
Family & Dataset & GCBC & GCIVL & GCIQL & QRL & CRL & HIQL & \bestfixed{} & \wmpp{} \\
\midrule
\multirow{2}{*}{Maze} & \texttt{pointmaze-medium-navigate} & $6 \pm 4$ & $74 \pm 3$ & $54 \pm 16$ & $\mathbf{76 \pm 17}$ & $35 \pm 6$ & $73 \pm 8$ & $\mathbf{76 \pm 17}$ & $\mathbf{79 \pm 3}$ \\
 & \texttt{antmaze-large-navigate} & $23 \pm 3$ & $19 \pm 5$ & $37 \pm 8$ & $78 \pm 5$ & $\mathbf{86 \pm 3}$ & $\mathbf{90 \pm 3}$ & $\mathbf{90 \pm 3}$ & $\mathbf{91 \pm 3}$ \\
\midrule
\multirow{6}{*}{Cube} & \texttt{cube-single-play} & $6 \pm 3$ & $49 \pm 4$ & $71 \pm 3$ & $8 \pm 4$ & $16 \pm 3$ & $12 \pm 6$ & $71 \pm 3$ & $\mathbf{85 \pm 3}$ \\
 & \texttt{cube-single-noisy} & $8 \pm 3$ & $71 \pm 8$ & $\mathbf{99 \pm 1}$ & $33 \pm 10$ & $40 \pm 5$ & $40 \pm 10$ & $\mathbf{99 \pm 1}$ & $\mathbf{100 \pm 0}$ \\
 & \texttt{cube-double-play} & $2 \pm 1$ & $35 \pm 6$ & $36 \pm 7$ & $1 \pm 1$ & $9 \pm 2$ & $5 \pm 2$ & $36 \pm 7$ & $\mathbf{69 \pm 3}$ \\
 & \texttt{cube-double-noisy} & $1 \pm 1$ & $14 \pm 3$ & $26 \pm 9$ & $3 \pm 2$ & $3 \pm 2$ & $2 \pm 1$ & $26 \pm 9$ & $\mathbf{60 \pm 7}$ \\
 & \texttt{cube-triple-play} & $1 \pm 1$ & $1 \pm 1$ & $2 \pm 1$ & $0 \pm 0$ & $\mathbf{5 \pm 3}$ & $2 \pm 1$ & $\mathbf{5 \pm 3}$ & $1 \pm 1$ \\
 & \texttt{cube-triple-noisy} & $2 \pm 2$ & $9 \pm 3$ & $2 \pm 2$ & $2 \pm 1$ & $1 \pm 1$ & $1 \pm 1$ & $9 \pm 3$ & $\mathbf{18 \pm 3}$ \\
\midrule
\multirow{2}{*}{Scene} & \texttt{scene-play} & $4 \pm 2$ & $42 \pm 5$ & $52 \pm 5$ & $6 \pm 2$ & $22 \pm 4$ & $43 \pm 8$ & $52 \pm 5$ & $\mathbf{87 \pm 4}$ \\
 & \texttt{scene-noisy} & $2 \pm 1$ & $30 \pm 3$ & $27 \pm 6$ & $6 \pm 3$ & $0 \pm 1$ & $26 \pm 4$ & $30 \pm 3$ & $\mathbf{70 \pm 5}$ \\
\midrule
\multirow{8}{*}{Puzzle} & \texttt{puzzle-3x3-play} & $2 \pm 1$ & $3 \pm 1$ & $91 \pm 4$ & $0 \pm 1$ & $4 \pm 2$ & $12 \pm 3$ & $91 \pm 4$ & $\mathbf{100 \pm 0}$ \\
 & \texttt{puzzle-3x3-noisy} & $1 \pm 1$ & $38 \pm 26$ & $\mathbf{91 \pm 3}$ & $0 \pm 0$ & $31 \pm 5$ & $57 \pm 7$ & $\mathbf{91 \pm 3}$ & $\mathbf{93 \pm 2}$ \\
 & \texttt{puzzle-4x4-play} & $0 \pm 0$ & $12 \pm 5$ & $24 \pm 5$ & $0 \pm 0$ & $1 \pm 1$ & $7 \pm 2$ & $24 \pm 5$ & $\mathbf{51 \pm 8}$ \\
 & \texttt{puzzle-4x4-noisy} & $0 \pm 0$ & $19 \pm 3$ & $34 \pm 5$ & $0 \pm 0$ & $0 \pm 0$ & $12 \pm 4$ & $34 \pm 5$ & $\mathbf{58 \pm 11}$ \\
 & \texttt{puzzle-4x5-play} & $0 \pm 0$ & $6 \pm 2$ & $14 \pm 3$ & $0 \pm 0$ & $2 \pm 1$ & $4 \pm 2$ & $14 \pm 3$ & $\mathbf{18 \pm 3}$ \\
 & \texttt{puzzle-4x5-noisy} & $0 \pm 0$ & $\mathbf{20 \pm 3}$ & $\mathbf{20 \pm 3}$ & $0 \pm 0$ & $5 \pm 4$ & $3 \pm 2$ & $\mathbf{20 \pm 3}$ & $\mathbf{20 \pm 3}$ \\
 & \texttt{puzzle-4x6-play} & $0 \pm 0$ & $11 \pm 2$ & $11 \pm 2$ & $0 \pm 0$ & $3 \pm 1$ & $3 \pm 2$ & $11 \pm 2$ & $\mathbf{17 \pm 3}$ \\
 & \texttt{puzzle-4x6-noisy} & $0 \pm 0$ & $18 \pm 3$ & $\mathbf{19 \pm 3}$ & $0 \pm 0$ & $5 \pm 2$ & $2 \pm 1$ & $\mathbf{19 \pm 3}$ & $\mathbf{20 \pm 3}$ \\
\midrule
\multicolumn{2}{l}{Average (18 datasets)} & $3$ & $26$ & $39$ & $12$ & $15$ & $22$ & $44$ & $\mathbf{58}$ \\
\bottomrule
\end{tabular}

%% file: tables/settings_family.tex
\begin{tabular}{lclccccc}
\toprule
Family & datasets & value head & $k{=}c$ & WM $H$ & $\gamma$ & $\kappa$ & $\lambda_{\mathrm{smooth}}$ \\
\midrule
Maze & 2 & metric & 1 & 10 & 0.999 & 0.9 & 10 \\
Cube & 6 & metric & 5 & 5 & 0.99 & 0.7 & 0 \\
Scene & 2 & metric & 10 & 5 & 0.998 & 0.7 & 10 \\
Puzzle & 8 & direct & 10 & 5 & 0.99 & 0.9 & -- \\
\bottomrule
\end{tabular}

%% file: tables/kc_grid.tex
\begin{tabular}{lcl|ccc|ccc|ccc}
\toprule
 & & & \multicolumn{3}{c|}{success rate (\%)} & \multicolumn{3}{c|}{switches per episode} & \multicolumn{3}{c}{latency (ms per step)} \\
Dataset & \bestfixed{} & $k$ & $c{=}1$ & $c{=}5$ & $c{=}10$ & $c{=}1$ & $c{=}5$ & $c{=}10$ & $c{=}1$ & $c{=}5$ & $c{=}10$ \\
\midrule
\multirow{3}{*}{\texttt{cube-double-play}} & \multirow{3}{*}{$36 \pm 7$} & $1$ & $63 \pm 4$ &  &  & $147$ &  &  & $3.9$ &  &  \\
 &  & $5$ & $65 \pm 4$ & \underline{$69 \pm 3$} &  & $107$ & $38$ &  & $15.5$ & $3.0$ &  \\
 &  & $10$ & $64 \pm 3$ & $75 \pm 3$ & $73 \pm 4$ & $90$ & $30$ & $19$ & $27.6$ & $5.9$ & $2.9$ \\
\midrule
\multirow{3}{*}{\texttt{scene-play}} & \multirow{3}{*}{$52 \pm 5$} & $1$ & $93 \pm 3$ &  &  & $135$ &  &  & $4.1$ &  &  \\
 &  & $5$ & $93 \pm 2$ & $92 \pm 2$ &  & $94$ & $35$ &  & $15.5$ & $3.0$ &  \\
 &  & $10$ & $89 \pm 3$ & $93 \pm 3$ & \underline{$87 \pm 4$} & $85$ & $29$ & $21$ & $31.8$ & $6.6$ & $3.0$ \\
\midrule
\multirow{3}{*}{\texttt{puzzle-4x4-play}} & \multirow{3}{*}{$24 \pm 5$} & $1$ & $50 \pm 6$ &  &  & $231$ &  &  & $16.8$ &  &  \\
 &  & $5$ & $52 \pm 11$ & $53 \pm 6$ &  & $167$ & $56$ &  & $23.0$ & $5.9$ &  \\
 &  & $10$ & $55 \pm 8$ & $54 \pm 10$ & \underline{$51 \pm 8$} & $131$ & $49$ & $29$ & $41.5$ & $6.7$ & $4.5$ \\
\bottomrule
\end{tabular}

%% file: tables/random_by_c.tex
\begin{tabular}{lcccc}
\toprule
Dataset & $c$ & Best & \randomswitch & \wmpp \\
\midrule
\texttt{pointmaze-medium-navigate} & $1$ & $76 \pm 17$ & $\mathbf{89 \pm 13}$ & $79 \pm 3$ \\
\texttt{antmaze-large-navigate} & $1$ & $\mathbf{90 \pm 3}$ & $\mathbf{92 \pm 2}$ & $\mathbf{91 \pm 3}$ \\
\texttt{cube-single-play} & $5$ & $71 \pm 3$ & $41 \pm 4$ & $\mathbf{85 \pm 3}$ \\
\texttt{cube-single-noisy} & $5$ & $\mathbf{99 \pm 1}$ & $\mathbf{96 \pm 2}$ & $\mathbf{100 \pm 0}$ \\
\texttt{cube-double-play} & $5$ & $36 \pm 7$ & $24 \pm 4$ & $\mathbf{69 \pm 3}$ \\
\texttt{cube-double-noisy} & $5$ & $26 \pm 9$ & $27 \pm 4$ & $\mathbf{60 \pm 7}$ \\
\texttt{cube-triple-play} & $5$ & $\mathbf{5 \pm 3}$ & $\mathbf{5 \pm 3}$ & $1 \pm 1$ \\
\texttt{cube-triple-noisy} & $5$ & $9 \pm 3$ & $\mathbf{25 \pm 3}$ & $18 \pm 3$ \\
\texttt{scene-play} & $10$ & $52 \pm 5$ & $42 \pm 4$ & $\mathbf{87 \pm 4}$ \\
\texttt{scene-noisy} & $10$ & $30 \pm 3$ & $47 \pm 5$ & $\mathbf{70 \pm 5}$ \\
\texttt{puzzle-3x3-play} & $10$ & $91 \pm 4$ & $21 \pm 4$ & $\mathbf{100 \pm 0}$ \\
\texttt{puzzle-3x3-noisy} & $10$ & $91 \pm 3$ & $\mathbf{97 \pm 2}$ & $\mathbf{93 \pm 2}$ \\
\texttt{puzzle-4x4-play} & $10$ & $24 \pm 5$ & $6 \pm 3$ & $\mathbf{51 \pm 8}$ \\
\texttt{puzzle-4x4-noisy} & $10$ & $34 \pm 5$ & $49 \pm 6$ & $\mathbf{58 \pm 11}$ \\
\texttt{puzzle-4x5-play} & $10$ & $14 \pm 3$ & $4 \pm 2$ & $\mathbf{18 \pm 3}$ \\
\texttt{puzzle-4x5-noisy} & $10$ & $\mathbf{20 \pm 3}$ & $13 \pm 3$ & $\mathbf{20 \pm 3}$ \\
\texttt{puzzle-4x6-play} & $10$ & $11 \pm 2$ & $5 \pm 2$ & $\mathbf{17 \pm 3}$ \\
\texttt{puzzle-4x6-noisy} & $10$ & $\mathbf{19 \pm 3}$ & $15 \pm 3$ & $\mathbf{20 \pm 3}$ \\
\bottomrule
\end{tabular}

%% file: tables/stall.tex
\begin{tabular}{lc|cccc|c}
\toprule
 & & \multicolumn{4}{c|}{stall-restart, window $m$} & \\
Dataset & Best & $m{=}5$ & $10$ & $25$ & $50$ & \wmpp \\
\midrule
\texttt{pointmaze-medium-navigate} & $76 \pm 17$ & $\mathbf{99 \pm 1}$ & $\mathbf{95 \pm 2}$ & $\mathbf{94 \pm 2}$ & $\mathbf{96 \pm 2}$ & $79 \pm 3$ \\
\texttt{antmaze-large-navigate} & $\mathbf{90 \pm 3}$ & $51 \pm 4$ & $51 \pm 4$ & $51 \pm 4$ & $51 \pm 4$ & $\mathbf{91 \pm 3}$ \\
\texttt{cube-single-play} & $71 \pm 3$ & $24 \pm 5$ & $24 \pm 5$ & $24 \pm 5$ & $24 \pm 5$ & $\mathbf{85 \pm 3}$ \\
\texttt{cube-single-noisy} & $\mathbf{99 \pm 1}$ & $53 \pm 5$ & $52 \pm 5$ & $52 \pm 5$ & $52 \pm 5$ & $\mathbf{100 \pm 0}$ \\
\texttt{cube-double-play} & $36 \pm 7$ & $16 \pm 4$ & $15 \pm 4$ & $15 \pm 4$ & $15 \pm 4$ & $\mathbf{69 \pm 3}$ \\
\texttt{cube-double-noisy} & $26 \pm 9$ & $11 \pm 2$ & $12 \pm 3$ & $11 \pm 3$ & $11 \pm 3$ & $\mathbf{60 \pm 7}$ \\
\texttt{cube-triple-play} & $\mathbf{5 \pm 3}$ & $2 \pm 1$ & $3 \pm 2$ & $2 \pm 1$ & $2 \pm 1$ & $1 \pm 1$ \\
\texttt{cube-triple-noisy} & $9 \pm 3$ & $4 \pm 2$ & $5 \pm 2$ & $4 \pm 1$ & $5 \pm 2$ & $\mathbf{18 \pm 3}$ \\
\texttt{scene-play} & $52 \pm 5$ & $32 \pm 4$ & $31 \pm 5$ & $30 \pm 4$ & $30 \pm 4$ & $\mathbf{87 \pm 4}$ \\
\texttt{scene-noisy} & $30 \pm 3$ & $19 \pm 3$ & $20 \pm 3$ & $18 \pm 3$ & $20 \pm 4$ & $\mathbf{70 \pm 5}$ \\
\texttt{puzzle-3x3-play} & $91 \pm 4$ & $20 \pm 3$ & $20 \pm 3$ & $20 \pm 3$ & $20 \pm 3$ & $\mathbf{100 \pm 0}$ \\
\texttt{puzzle-3x3-noisy} & $\mathbf{91 \pm 3}$ & $44 \pm 6$ & $45 \pm 5$ & $43 \pm 6$ & $44 \pm 6$ & $\mathbf{93 \pm 2}$ \\
\texttt{puzzle-4x4-play} & $24 \pm 5$ & $8 \pm 2$ & $8 \pm 2$ & $8 \pm 2$ & $8 \pm 2$ & $\mathbf{51 \pm 8}$ \\
\texttt{puzzle-4x4-noisy} & $34 \pm 5$ & $12 \pm 3$ & $12 \pm 3$ & $11 \pm 2$ & $11 \pm 2$ & $\mathbf{58 \pm 11}$ \\
\texttt{puzzle-4x5-play} & $14 \pm 3$ & $4 \pm 2$ & $4 \pm 1$ & $4 \pm 1$ & $4 \pm 2$ & $\mathbf{18 \pm 3}$ \\
\texttt{puzzle-4x5-noisy} & $\mathbf{20 \pm 3}$ & $7 \pm 2$ & $8 \pm 2$ & $8 \pm 2$ & $8 \pm 2$ & $\mathbf{20 \pm 3}$ \\
\texttt{puzzle-4x6-play} & $11 \pm 2$ & $5 \pm 2$ & $5 \pm 2$ & $5 \pm 2$ & $5 \pm 2$ & $\mathbf{17 \pm 3}$ \\
\texttt{puzzle-4x6-noisy} & $\mathbf{19 \pm 3}$ & $10 \pm 2$ & $11 \pm 3$ & $9 \pm 2$ & $10 \pm 2$ & $\mathbf{20 \pm 3}$ \\
\bottomrule
\end{tabular}

%% file: tables/search.tex
\begin{tabular}{lccc|c}
\toprule
Dataset & $(k,c)$ of \wmpp & Best & every-step WM action search on Best & \wmpp \\
\midrule
\texttt{pointmaze-medium-navigate} & $(1,1)$ & $76 \pm 17$ & $\mathbf{86 \pm 9}$ & $79 \pm 3$ \\
\texttt{antmaze-large-navigate} & $(1,1)$ & $\mathbf{90 \pm 3}$ & $\mathbf{92 \pm 3}$ & $\mathbf{91 \pm 3}$ \\
\texttt{cube-single-play} & $(5,5)$ & $71 \pm 3$ & $52 \pm 4$ & $\mathbf{85 \pm 3}$ \\
\texttt{cube-single-noisy} & $(5,5)$ & $\mathbf{99 \pm 1}$ & $\mathbf{100 \pm 0}$ & $\mathbf{100 \pm 0}$ \\
\texttt{cube-double-play} & $(5,5)$ & $36 \pm 7$ & $39 \pm 4$ & $\mathbf{69 \pm 3}$ \\
\texttt{cube-double-noisy} & $(5,5)$ & $26 \pm 9$ & $49 \pm 6$ & $\mathbf{60 \pm 7}$ \\
\texttt{cube-triple-play} & $(5,5)$ & $\mathbf{5 \pm 3}$ & $0 \pm 1$ & $1 \pm 1$ \\
\texttt{cube-triple-noisy} & $(5,5)$ & $9 \pm 3$ & $\mathbf{21 \pm 3}$ & $18 \pm 3$ \\
\texttt{scene-play} & $(10,10)$ & $52 \pm 5$ & $69 \pm 4$ & $\mathbf{87 \pm 4}$ \\
\texttt{scene-noisy} & $(10,10)$ & $30 \pm 3$ & $65 \pm 4$ & $\mathbf{70 \pm 5}$ \\
\texttt{puzzle-3x3-play} & $(10,10)$ & $91 \pm 4$ & $\mathbf{97 \pm 2}$ & $\mathbf{100 \pm 0}$ \\
\texttt{puzzle-3x3-noisy} & $(10,10)$ & $\mathbf{91 \pm 3}$ & $\mathbf{92 \pm 2}$ & $\mathbf{93 \pm 2}$ \\
\texttt{puzzle-4x4-play} & $(10,10)$ & $24 \pm 5$ & $\mathbf{54 \pm 9}$ & $51 \pm 8$ \\
\texttt{puzzle-4x4-noisy} & $(10,10)$ & $34 \pm 5$ & $52 \pm 11$ & $\mathbf{58 \pm 11}$ \\
\texttt{puzzle-4x5-play} & $(10,10)$ & $14 \pm 3$ & $\mathbf{20 \pm 3}$ & $18 \pm 3$ \\
\texttt{puzzle-4x5-noisy} & $(10,10)$ & $\mathbf{20 \pm 3}$ & $\mathbf{20 \pm 3}$ & $\mathbf{20 \pm 3}$ \\
\texttt{puzzle-4x6-play} & $(10,10)$ & $11 \pm 2$ & $\mathbf{20 \pm 3}$ & $17 \pm 3$ \\
\texttt{puzzle-4x6-noisy} & $(10,10)$ & $\mathbf{19 \pm 3}$ & $\mathbf{20 \pm 3}$ & $\mathbf{20 \pm 3}$ \\
\bottomrule
\end{tabular}

%% file: tables/sim_all.tex
\begin{tabular}{lcccc r}
\toprule
Dataset & $(k,c)$ & Best & \wmpp & true-sim.\ branches & $\Delta$ (sim $-$ \wmpp) \\
\midrule
\texttt{pointmaze-medium-navigate} & $(1,1)$ & $\mathbf{76 \pm 17}$ & $\mathbf{79 \pm 3}$ & $\mathbf{80 \pm 3}$ & +0 [-0, +1] \\
\texttt{antmaze-large-navigate} & $(1,1)$ & $90 \pm 3$ & $\mathbf{91 \pm 3}$ & $\mathbf{95 \pm 2}$ & +4 [-0, +9] \\
\texttt{cube-single-play} & $(5,5)$ & $71 \pm 3$ & $\mathbf{85 \pm 3}$ & $\mathbf{86 \pm 3}$ & +2 [-3, +7] \\
\texttt{cube-single-noisy} & $(5,5)$ & $\mathbf{99 \pm 1}$ & $\mathbf{100 \pm 0}$ & $\mathbf{100 \pm 0}$ & +0 [+0, +1] \\
\texttt{cube-double-play} & $(5,5)$ & $36 \pm 7$ & $\mathbf{69 \pm 3}$ & $\mathbf{72 \pm 4}$ & +3 [-1, +7] \\
\texttt{cube-double-noisy} & $(5,5)$ & $26 \pm 9$ & $\mathbf{60 \pm 7}$ & $\mathbf{61 \pm 4}$ & +0 [-10, +9] \\
\texttt{cube-triple-play} & $(5,5)$ & $\mathbf{5 \pm 3}$ & $1 \pm 1$ & $0 \pm 1$ & -0 [-1, +0] \\
\texttt{cube-triple-noisy} & $(5,5)$ & $9 \pm 3$ & $\mathbf{18 \pm 3}$ & $\mathbf{18 \pm 4}$ & +0 [-3, +4] \\
\texttt{scene-play} & $(10,10)$ & $52 \pm 5$ & $87 \pm 4$ & $\mathbf{95 \pm 3}$ & +7 [+5, +10]* \\
\texttt{scene-noisy} & $(10,10)$ & $30 \pm 3$ & $70 \pm 5$ & $\mathbf{77 \pm 6}$ & +7 [+4, +11]* \\
\texttt{puzzle-3x3-play} & $(10,10)$ & $91 \pm 4$ & $\mathbf{100 \pm 0}$ & $\mathbf{100 \pm 0}$ & -0 [-1, +0] \\
\texttt{puzzle-3x3-noisy} & $(10,10)$ & $\mathbf{91 \pm 3}$ & $\mathbf{93 \pm 2}$ & $\mathbf{93 \pm 3}$ & +0 [-2, +3] \\
\texttt{puzzle-4x4-play} & $(10,10)$ & $24 \pm 5$ & $\mathbf{51 \pm 8}$ & $\mathbf{53 \pm 6}$ & +2 [-4, +8] \\
\texttt{puzzle-4x4-noisy} & $(10,10)$ & $34 \pm 5$ & $58 \pm 11$ & $\mathbf{62 \pm 11}$ & +4 [-0, +7] \\
\texttt{puzzle-4x5-play} & $(10,10)$ & $14 \pm 3$ & $\mathbf{18 \pm 3}$ & $\mathbf{18 \pm 3}$ & +0 [-1, +1] \\
\texttt{puzzle-4x5-noisy} & $(10,10)$ & $\mathbf{20 \pm 3}$ & $\mathbf{20 \pm 3}$ & $\mathbf{20 \pm 3}$ & +0 [+0, +0] \\
\texttt{puzzle-4x6-play} & $(10,10)$ & $11 \pm 2$ & $\mathbf{17 \pm 3}$ & $\mathbf{18 \pm 3}$ & +1 [-1, +3] \\
\texttt{puzzle-4x6-noisy} & $(10,10)$ & $\mathbf{19 \pm 3}$ & $\mathbf{20 \pm 3}$ & $\mathbf{20 \pm 3}$ & -0 [-1, +0] \\
\bottomrule
\end{tabular}

%% file: tables/scorer_selector.tex
\begin{tabular}{lcc|cc|cc|r}
\toprule
 & & & \multicolumn{2}{c|}{\wmpp{} (imagined states)} & \multicolumn{2}{c|}{Q-select (no rollout)} & \\
\cmidrule(lr){4-5}\cmidrule(lr){6-7}
Dataset & $(k,c)$ & Best & metric value & direct value & $c{=}1$ & $c{=}k$ & $\Delta$ (direct $-$ Q-select, $c{=}k$) \\
\midrule
\texttt{pointmaze-medium-navigate} & $(1,1)$ & $\mathbf{76 \pm 17}$ & $\mathbf{79 \pm 3}$$^{\dagger}$ & $56 \pm 8$ & $53 \pm 9$ & $53 \pm 9$ & +3 [+1, +4]* \\
\texttt{antmaze-large-navigate} & $(1,1)$ & $\mathbf{90 \pm 3}$ & $\mathbf{91 \pm 3}$$^{\dagger}$ & $28 \pm 4$ & $26 \pm 6$ & $26 \pm 6$ & +2 [-2, +6] \\
\texttt{cube-single-play} & $(5,5)$ & $71 \pm 3$ & $\mathbf{85 \pm 3}$$^{\dagger}$ & $76 \pm 4$ & $67 \pm 20$ & $58 \pm 23$ & +18 [+2, +44]* \\
\texttt{cube-single-noisy} & $(5,5)$ & $\mathbf{99 \pm 1}$ & $\mathbf{100 \pm 0}$$^{\dagger}$ & $\mathbf{100 \pm 0}$ & $\mathbf{100 \pm 0}$ & $\mathbf{100 \pm 0}$ & +0 [-1, +1] \\
\texttt{cube-double-play} & $(5,5)$ & $36 \pm 7$ & $\mathbf{69 \pm 3}$$^{\dagger}$ & $62 \pm 5$ & $51 \pm 5$ & $62 \pm 7$ & -0 [-5, +5] \\
\texttt{cube-double-noisy} & $(5,5)$ & $26 \pm 9$ & $\mathbf{60 \pm 7}$$^{\dagger}$ & $45 \pm 6$ & $38 \pm 3$ & $38 \pm 6$ & +7 [+3, +12]* \\
\texttt{cube-triple-play} & $(5,5)$ & $5 \pm 3$ & $1 \pm 1$$^{\dagger}$ & $\mathbf{9 \pm 2}$ & $3 \pm 2$ & $7 \pm 2$ & +2 [-1, +4] \\
\texttt{cube-triple-noisy} & $(5,5)$ & $9 \pm 3$ & $\mathbf{18 \pm 3}$$^{\dagger}$ & $10 \pm 4$ & $7 \pm 3$ & $8 \pm 3$ & +2 [-1, +5] \\
\texttt{scene-play} & $(10,10)$ & $52 \pm 5$ & $\mathbf{87 \pm 4}$$^{\dagger}$ & $68 \pm 4$ & $64 \pm 5$ & $61 \pm 5$ & +8 [+1, +13]* \\
\texttt{scene-noisy} & $(10,10)$ & $30 \pm 3$ & $\mathbf{70 \pm 5}$$^{\dagger}$ & $57 \pm 4$ & $41 \pm 5$ & $48 \pm 5$ & +9 [+5, +13]* \\
\texttt{puzzle-3x3-play} & $(10,10)$ & $91 \pm 4$ & $13 \pm 4$ & $\mathbf{100 \pm 0}$$^{\dagger}$ & $\mathbf{96 \pm 3}$ & $76 \pm 4$ & +24 [+20, +28]* \\
\texttt{puzzle-3x3-noisy} & $(10,10)$ & $\mathbf{91 \pm 3}$ & $41 \pm 7$ & $\mathbf{93 \pm 2}$$^{\dagger}$ & $88 \pm 3$ & $\mathbf{93 \pm 2}$ & -0 [-3, +2] \\
\texttt{puzzle-4x4-play} & $(10,10)$ & $24 \pm 5$ & $17 \pm 3$ & $\mathbf{51 \pm 8}$$^{\dagger}$ & $47 \pm 9$ & $26 \pm 5$ & +25 [+15, +37]* \\
\texttt{puzzle-4x4-noisy} & $(10,10)$ & $34 \pm 5$ & $43 \pm 5$ & $\mathbf{58 \pm 11}$$^{\dagger}$ & $53 \pm 7$ & $55 \pm 10$ & +3 [-1, +7] \\
\texttt{puzzle-4x5-play} & $(10,10)$ & $14 \pm 3$ & $10 \pm 3$ & $18 \pm 3$$^{\dagger}$ & $\mathbf{20 \pm 3}$ & $10 \pm 2$ & +8 [+5, +11]* \\
\texttt{puzzle-4x5-noisy} & $(10,10)$ & $\mathbf{20 \pm 3}$ & $\mathbf{20 \pm 3}$ & $\mathbf{20 \pm 3}$$^{\dagger}$ & $\mathbf{20 \pm 3}$ & $18 \pm 3$ & +2 [+1, +3]* \\
\texttt{puzzle-4x6-play} & $(10,10)$ & $11 \pm 2$ & $13 \pm 3$ & $17 \pm 3$$^{\dagger}$ & $\mathbf{20 \pm 3}$ & $7 \pm 2$ & +10 [+7, +13]* \\
\texttt{puzzle-4x6-noisy} & $(10,10)$ & $\mathbf{19 \pm 3}$ & $13 \pm 3$ & $\mathbf{20 \pm 3}$$^{\dagger}$ & $\mathbf{19 \pm 3}$ & $18 \pm 3$ & +2 [+1, +4]* \\
\bottomrule
\end{tabular}

%% file: tables/native_critic.tex
\begin{tabular}{lccc r}
\toprule
Dataset & $(k,c)$ & Best member & \wmpa{}-own & \wmpa{}-shared \\
\midrule
\texttt{pointmaze-medium-navigate} & $(1,1)$ & $\mathbf{74 \pm 3}$ & $54 \pm 16$ & $\mathbf{76 \pm 6}$ \\
\texttt{antmaze-large-navigate} & $(1,1)$ & $\mathbf{90 \pm 3}$ & $19 \pm 4$ & $\mathbf{90 \pm 3}$ \\
\texttt{cube-single-play} & $(5,5)$ & $71 \pm 3$ & $71 \pm 3$ & $\mathbf{81 \pm 4}$ \\
\texttt{cube-single-noisy} & $(5,5)$ & $\mathbf{99 \pm 1}$ & $94 \pm 4$ & $\mathbf{100 \pm 0}$ \\
\texttt{cube-double-play} & $(5,5)$ & $36 \pm 7$ & $37 \pm 7$ & $\mathbf{76 \pm 5}$ \\
\texttt{cube-double-noisy} & $(5,5)$ & $26 \pm 9$ & $25 \pm 5$ & $\mathbf{53 \pm 4}$ \\
\texttt{cube-triple-play} & $(5,5)$ & $\mathbf{2 \pm 1}$ & $\mathbf{2 \pm 2}$ & $\mathbf{2 \pm 1}$ \\
\texttt{cube-triple-noisy} & $(5,5)$ & $9 \pm 3$ & $3 \pm 2$ & $\mathbf{15 \pm 4}$ \\
\texttt{scene-play} & $(10,10)$ & $52 \pm 5$ & $52 \pm 5$ & $\mathbf{82 \pm 6}$ \\
\texttt{scene-noisy} & $(10,10)$ & $30 \pm 4$ & $36 \pm 4$ & $\mathbf{57 \pm 5}$ \\
\texttt{puzzle-3x3-play} & $(10,10)$ & $91 \pm 4$ & $94 \pm 3$ & $\mathbf{99 \pm 1}$ \\
\texttt{puzzle-3x3-noisy} & $(10,10)$ & $\mathbf{91 \pm 3}$ & $\mathbf{92 \pm 3}$ & $\mathbf{93 \pm 2}$ \\
\texttt{puzzle-4x4-play} & $(10,10)$ & $24 \pm 5$ & $26 \pm 5$ & $\mathbf{47 \pm 6}$ \\
\texttt{puzzle-4x4-noisy} & $(10,10)$ & $34 \pm 5$ & $41 \pm 8$ & $\mathbf{55 \pm 9}$ \\
\texttt{puzzle-4x5-play} & $(10,10)$ & $14 \pm 3$ & $14 \pm 3$ & $\mathbf{17 \pm 3}$ \\
\texttt{puzzle-4x5-noisy} & $(10,10)$ & $\mathbf{20 \pm 3}$ & $\mathbf{20 \pm 3}$ & $\mathbf{20 \pm 3}$ \\
\texttt{puzzle-4x6-play} & $(10,10)$ & $11 \pm 2$ & $13 \pm 2$ & $\mathbf{17 \pm 4}$ \\
\texttt{puzzle-4x6-noisy} & $(10,10)$ & $\mathbf{19 \pm 3}$ & $\mathbf{20 \pm 3}$ & $\mathbf{20 \pm 3}$ \\
\bottomrule
\end{tabular}

%% file: tables/value_head_selection.tex
\begin{tabular}{lccc c}
\toprule
Dataset & metric & direct & direct $-$ metric [95\% CI] & head in \cref{tab:main} \\
\midrule
\texttt{cube-double-play} & $\mathbf{0.75}$ & $0.63$ & $-0.12$ [$-0.13$, $-0.12$]* & metric \\
\texttt{scene-play} & $\mathbf{0.75}$ & $0.67$ & $-0.08$ [$-0.08$, $-0.08$]* & metric \\
\texttt{puzzle-4x4-play} & $0.70$ & $\mathbf{0.82}$ & $+0.12$ [$+0.11$, $+0.12$]* & direct \\
\texttt{cube-triple-play} & $0.64$ & $\mathbf{0.66}$ & $+0.02$ [$+0.01$, $+0.03$]* & metric \\
\bottomrule
\end{tabular}

%% file: tables/ablations.tex
\begin{tabular}{lccc|cccc}
\toprule
 & & & & \multicolumn{2}{c}{horizon agg.} & \multicolumn{2}{c}{ensemble agg.} \\
\cmidrule(lr){5-6}\cmidrule(lr){7-8}
Dataset & $(k,c)$ & Best & \wmpp & last & mean & min & mean-std \\
\midrule
\texttt{pointmaze-medium-navigate} & $(1,1)$ & $\mathbf{76 \pm 17}$ & $\mathbf{79 \pm 3}$ & $\mathbf{79 \pm 3}$ & $\mathbf{79 \pm 3}$ & $\mathbf{80 \pm 4}$ & $\mathbf{80 \pm 4}$ \\
\texttt{antmaze-large-navigate} & $(1,1)$ & $\mathbf{90 \pm 3}$ & $\mathbf{91 \pm 3}$ & $\mathbf{91 \pm 3}$ & $\mathbf{91 \pm 3}$ & $\mathbf{92 \pm 3}$ & $\mathbf{91 \pm 3}$ \\
\texttt{cube-single-play} & $(5,5)$ & $71 \pm 3$ & $\mathbf{85 \pm 3}$ & $\mathbf{87 \pm 3}$ & $\mathbf{84 \pm 4}$ & $\mathbf{88 \pm 3}$ & $\mathbf{87 \pm 3}$ \\
\texttt{cube-single-noisy} & $(5,5)$ & $\mathbf{99 \pm 1}$ & $\mathbf{100 \pm 0}$ & $\mathbf{100 \pm 0}$ & $\mathbf{100 \pm 0}$ & $\mathbf{100 \pm 0}$ & $\mathbf{99 \pm 1}$ \\
\texttt{cube-double-play} & $(5,5)$ & $36 \pm 7$ & $\mathbf{69 \pm 3}$ & $\mathbf{71 \pm 4}$ & $66 \pm 4$ & $\mathbf{71 \pm 5}$ & $\mathbf{70 \pm 4}$ \\
\texttt{cube-double-noisy} & $(5,5)$ & $26 \pm 9$ & $\mathbf{60 \pm 7}$ & $56 \pm 5$ & $54 \pm 4$ & $51 \pm 4$ & $56 \pm 4$ \\
\texttt{cube-triple-play} & $(5,5)$ & $\mathbf{5 \pm 3}$ & $1 \pm 1$ & $1 \pm 1$ & $0 \pm 1$ & $1 \pm 1$ & $0 \pm 0$ \\
\texttt{cube-triple-noisy} & $(5,5)$ & $9 \pm 3$ & $18 \pm 3$ & $17 \pm 3$ & $15 \pm 3$ & $\mathbf{19 \pm 3}$ & $\mathbf{19 \pm 4}$ \\
\texttt{scene-play} & $(10,10)$ & $52 \pm 5$ & $87 \pm 4$ & $\mathbf{91 \pm 3}$ & $\mathbf{92 \pm 4}$ & $\mathbf{89 \pm 3}$ & $\mathbf{90 \pm 3}$ \\
\texttt{scene-noisy} & $(10,10)$ & $30 \pm 3$ & $70 \pm 5$ & $72 \pm 5$ & $70 \pm 5$ & $\mathbf{75 \pm 5}$ & $\mathbf{78 \pm 5}$ \\
\texttt{puzzle-3x3-play} & $(10,10)$ & $91 \pm 4$ & $\mathbf{100 \pm 0}$ & $\mathbf{99 \pm 1}$ & $\mathbf{100 \pm 0}$ & $\mathbf{99 \pm 1}$ & $\mathbf{100 \pm 1}$ \\
\texttt{puzzle-3x3-noisy} & $(10,10)$ & $\mathbf{91 \pm 3}$ & $\mathbf{93 \pm 2}$ & $\mathbf{92 \pm 3}$ & $\mathbf{91 \pm 2}$ & $\mathbf{94 \pm 2}$ & $\mathbf{93 \pm 2}$ \\
\texttt{puzzle-4x4-play} & $(10,10)$ & $24 \pm 5$ & $51 \pm 8$ & $\mathbf{59 \pm 12}$ & $54 \pm 8$ & $53 \pm 7$ & $53 \pm 5$ \\
\texttt{puzzle-4x4-noisy} & $(10,10)$ & $34 \pm 5$ & $\mathbf{58 \pm 11}$ & $\mathbf{58 \pm 9}$ & $54 \pm 11$ & $\mathbf{56 \pm 12}$ & $\mathbf{55 \pm 12}$ \\
\texttt{puzzle-4x5-play} & $(10,10)$ & $14 \pm 3$ & $\mathbf{18 \pm 3}$ & $\mathbf{19 \pm 3}$ & $\mathbf{18 \pm 3}$ & $17 \pm 3$ & $17 \pm 3$ \\
\texttt{puzzle-4x5-noisy} & $(10,10)$ & $\mathbf{20 \pm 3}$ & $\mathbf{20 \pm 3}$ & $\mathbf{20 \pm 3}$ & $\mathbf{20 \pm 3}$ & $\mathbf{20 \pm 3}$ & $\mathbf{20 \pm 3}$ \\
\texttt{puzzle-4x6-play} & $(10,10)$ & $11 \pm 2$ & $17 \pm 3$ & $\mathbf{19 \pm 3}$ & $\mathbf{18 \pm 3}$ & $\mathbf{18 \pm 3}$ & $\mathbf{18 \pm 3}$ \\
\texttt{puzzle-4x6-noisy} & $(10,10)$ & $\mathbf{19 \pm 3}$ & $\mathbf{20 \pm 3}$ & $\mathbf{20 \pm 3}$ & $\mathbf{20 \pm 3}$ & $\mathbf{19 \pm 3}$ & $\mathbf{19 \pm 3}$ \\
\bottomrule
\end{tabular}

%% file: tables/bank_composition.tex
\begin{tabular}{llccccrr}
\toprule
Dataset & Bank & $P$ & Best member & \wmpa & $\Delta$ vs best member & $\Delta$ vs six-member \wmpa \\
\midrule
\multirow{2}{*}{\texttt{cube-double-play}} & Full bank & 6 & GCIQL ($36 \pm 7$) & $\mathbf{69 \pm 3}$ &  &  \\
 & Pooled play+noisy & 12 & GCIQL ($36 \pm 7$) & $\mathbf{79 \pm 3}$ & +43 [+35, +50]* & +10 [+6, +15]* \\
\midrule
\multirow{2}{*}{\texttt{scene-play}} & Full bank & 6 & GCIQL ($52 \pm 5$) & $\mathbf{87 \pm 4}$ &  &  \\
 & Pooled play+noisy & 12 & GCIQL ($52 \pm 5$) & $\mathbf{88 \pm 3}$ & +37 [+30, +43]* & +1 [-3, +5] \\
\midrule
\multirow{2}{*}{\texttt{puzzle-4x4-play}} & Full bank & 6 & GCIQL ($24 \pm 5$) & $\mathbf{51 \pm 8}$ &  &  \\
 & Pooled play+noisy & 12 & GCIQL (noisy) ($34 \pm 4$) & $\mathbf{53 \pm 6}$ & +18 [+11, +26]* & +2 [-4, +8] \\
\bottomrule
\end{tabular}

%% file: tables/bank_stats.tex
\begin{tabular}{lccccc rr}
\toprule
Dataset & $B_{\max}$ & $D_{\mathrm{gap}}$ & $U$ (oracle) & $U-B_{\max}$ & \wmpp & $\Delta$\wmpp{} [95\% CI] & \wmpp$-U$ [95\% CI] \\
\midrule
\texttt{pointmaze-medium-navigate} & $76 \pm 17$ & $2$ & $\mathbf{98 \pm 1}$ & $22$ & $79 \pm 3$ & +3 [-17, +17] & -19 [-22, -16]* \\
\texttt{antmaze-large-navigate} & $90 \pm 3$ & $4$ & $\mathbf{99 \pm 1}$ & $9$ & $91 \pm 3$ & +0 [-3, +4] & -9 [-12, -6]* \\
\texttt{cube-single-play} & $71 \pm 3$ & $22$ & $\mathbf{81 \pm 3}$ & $10$ & $\mathbf{85 \pm 3}$ & +14 [+10, +18]* & +4 [-1, +9] \\
\texttt{cube-single-noisy} & $\mathbf{99 \pm 1}$ & $28$ & $\mathbf{100 \pm 0}$ & $1$ & $\mathbf{100 \pm 0}$ & +1 [+0, +2]* & +0 [-0, +1] \\
\texttt{cube-double-play} & $36 \pm 7$ & $1$ & $57 \pm 6$ & $21$ & $\mathbf{69 \pm 3}$ & +33 [+25, +40]* & +12 [+6, +18]* \\
\texttt{cube-double-noisy} & $26 \pm 9$ & $11$ & $31 \pm 6$ & $6$ & $\mathbf{60 \pm 7}$ & +35 [+30, +40]* & +29 [+24, +34]* \\
\texttt{cube-triple-play} & $5 \pm 3$ & $2$ & $\mathbf{9 \pm 3}$ & $4$ & $1 \pm 1$ & -4 [-8, -1]* & -8 [-11, -5]* \\
\texttt{cube-triple-noisy} & $9 \pm 3$ & $7$ & $11 \pm 3$ & $2$ & $\mathbf{18 \pm 3}$ & +9 [+6, +12]* & +7 [+3, +10]* \\
\texttt{scene-play} & $52 \pm 5$ & $9$ & $74 \pm 6$ & $22$ & $\mathbf{87 \pm 4}$ & +36 [+30, +41]* & +13 [+8, +19]* \\
\texttt{scene-noisy} & $30 \pm 3$ & $3$ & $52 \pm 4$ & $22$ & $\mathbf{70 \pm 5}$ & +41 [+36, +45]* & +19 [+14, +23]* \\
\texttt{puzzle-3x3-play} & $91 \pm 4$ & $79$ & $92 \pm 3$ & $1$ & $\mathbf{100 \pm 0}$ & +9 [+5, +13]* & +8 [+5, +11]* \\
\texttt{puzzle-3x3-noisy} & $91 \pm 3$ & $34$ & $\mathbf{97 \pm 2}$ & $6$ & $\mathbf{93 \pm 2}$ & +1 [-1, +4] & -5 [-7, -2]* \\
\texttt{puzzle-4x4-play} & $24 \pm 5$ & $13$ & $37 \pm 5$ & $13$ & $\mathbf{51 \pm 8}$ & +27 [+18, +36]* & +14 [+7, +21]* \\
\texttt{puzzle-4x4-noisy} & $34 \pm 5$ & $15$ & $52 \pm 7$ & $18$ & $\mathbf{58 \pm 11}$ & +24 [+16, +32]* & +6 [-2, +17] \\
\texttt{puzzle-4x5-play} & $14 \pm 3$ & $9$ & $17 \pm 3$ & $3$ & $\mathbf{18 \pm 3}$ & +4 [+3, +6]* & +1 [-0, +3] \\
\texttt{puzzle-4x5-noisy} & $\mathbf{20 \pm 3}$ & $0$ & $\mathbf{20 \pm 3}$ & $0$ & $\mathbf{20 \pm 3}$ & +0 [+0, +1] & -0 [-1, +0] \\
\texttt{puzzle-4x6-play} & $11 \pm 2$ & $0$ & $\mathbf{19 \pm 3}$ & $8$ & $17 \pm 3$ & +6 [+3, +9]* & -2 [-6, +2] \\
\texttt{puzzle-4x6-noisy} & $19 \pm 3$ & $1$ & $\mathbf{22 \pm 3}$ & $3$ & $20 \pm 3$ & +1 [-0, +2] & -2 [-3, -1]* \\
\bottomrule
\end{tabular}

%% file: tables/runtime.tex
\begin{tabular}{lcccccccccc}
\toprule
 & & & \multicolumn{2}{c}{switches/ep} & & & & \multicolumn{3}{c}{ms/step (CPU)} \\
\cmidrule(lr){4-5}\cmidrule(lr){9-11}
Dataset & $(k,c)$ & ep.\ len & \wmpp & Rand. & decisions/ep & WM calls/step & usage $H$ (bits) & \wmpp & Rand. & fixed \\
\midrule
\texttt{pointmaze-medium-navigate} & $(1,1)$ & 295 & 24.8 & 246.5 & 294.8 & 18.0 & 1.30 & 3.76 & 0.21 & 0.18 \\
\texttt{antmaze-large-navigate} & $(1,1)$ & 475 & 335.9 & 425.8 & 475.2 & 18.0 & 2.57 & 3.88 & 0.19 & 0.15 \\
\texttt{cube-single-play} & $(5,5)$ & 90 & 13.0 & 27.3 & 18.3 & 18.3 & 2.54 & 3.29 & 0.35 & 0.23 \\
\texttt{cube-single-noisy} & $(5,5)$ & 46 & 3.7 & 13.0 & 9.5 & 18.8 & 1.61 & 3.24 & 0.24 & 0.30 \\
\texttt{cube-double-play} & $(5,5)$ & 264 & 37.8 & 73.8 & 53.1 & 18.1 & 2.55 & 2.97 & 0.23 & 0.22 \\
\texttt{cube-double-noisy} & $(5,5)$ & 310 & 41.4 & 69.9 & 62.3 & 18.1 & 2.45 & 2.89 & 0.23 & 0.34 \\
\texttt{cube-triple-play} & $(5,5)$ & 996 & 127.1 & 161.0 & 199.1 & 18.0 & 2.51 & 2.77 & 0.25 & 0.21 \\
\texttt{cube-triple-noisy} & $(5,5)$ & 870 & 103.9 & 139.9 & 174.1 & 18.0 & 2.54 & 2.96 & 0.26 & 0.26 \\
\texttt{scene-play} & $(10,10)$ & 277 & 20.7 & 48.7 & 28.1 & 18.3 & 2.48 & 2.98 & 0.27 & 0.24 \\
\texttt{scene-noisy} & $(10,10)$ & 355 & 20.7 & 44.4 & 35.8 & 18.2 & 2.40 & 2.60 & 0.23 & 0.19 \\
\texttt{puzzle-3x3-play} & $(10,10)$ & 195 & 15.8 & 36.9 & 20.0 & 18.4 & 2.50 & 3.85 & 0.25 & 0.23 \\
\texttt{puzzle-3x3-noisy} & $(10,10)$ & 102 & 3.3 & 18.0 & 10.7 & 18.7 & 1.37 & 4.26 & 0.26 & 0.27 \\
\texttt{puzzle-4x4-play} & $(10,10)$ & 361 & 29.1 & 40.0 & 36.3 & 18.1 & 2.52 & 4.46 & 0.26 & 0.24 \\
\texttt{puzzle-4x4-noisy} & $(10,10)$ & 281 & 16.1 & 32.0 & 28.3 & 18.2 & 2.09 & 3.50 & 0.40 & 0.20 \\
\texttt{puzzle-4x5-play} & $(10,10)$ & 840 & 61.3 & 80.5 & 84.1 & 18.0 & 2.34 & 4.19 & 0.27 & 0.29 \\
\texttt{puzzle-4x5-noisy} & $(10,10)$ & 811 & 41.2 & 73.7 & 81.2 & 18.0 & 2.09 & 4.13 & 0.25 & 0.21 \\
\texttt{puzzle-4x6-play} & $(10,10)$ & 858 & 66.1 & 80.2 & 85.9 & 18.0 & 2.44 & 3.82 & 0.45 & 0.34 \\
\texttt{puzzle-4x6-noisy} & $(10,10)$ & 819 & 43.9 & 72.8 & 82.0 & 18.0 & 2.08 & 4.38 & 0.27 & 0.41 \\
\bottomrule
\end{tabular}